\DeclarePairedDelimiter{\norm}{\lVert}{\rVert}
\newcommand{\Comments}{1}
\newcommand{\mynote}[2]{\ifnum\Comments=1\textcolor{#1}{#2}\fi}
\newcommand{\mytodo}[2]{\ifnum\Comments=1\todo[linecolor=#1!80!black,backgroundcolor=#1,bordercolor=#1!80!black]{#2}\fi}
\newcommand{\btw}[1]{}
\title{Efficient Online Learning of Optimal Rankings: Dimensionality Reduction via Gradient Descent}
\def\blfootnote{\xdef\@thefnmark{}\@footnotetext}
\author{%
  Dimitris Fotakis\thanks{National Technical University of Athens, fotakis@cs.ntua.gr}\rule{.085\textwidth}{0pt}  \\
  \and 
  Thanasis Lianeas\thanks{National Technical University of Athens, lianeas@corelab.ntua.gr} \rule{.085\textwidth}{0pt}  \\
  \and 
  Georgios Piliouras\thanks{Singapore University of Technology and Design, georgios@sutd.edu.sg} \rule{.085\textwidth}{0pt} \\
  \and  
  Stratis Skoulakis\thanks{Singapore University of Technology and Design, efstratios@sutd.edu.sg} \rule{.085\textwidth}{0pt} \\
}
\date{}
\newtheorem{theorem}{Theorem}
\newtheorem{lemma}{Lemma}
\newtheorem{corollary}{Corollary}
\newtheorem{remark}{Remark}
\newtheorem{definition}{Definition}
\newtheorem{question}{Question}
\newtheorem{observation}{Observation}
\newtheorem{problem}{Problem}
\newtheorem{example}{Example}
\newtheorem{claim}{Claim}
\def\E{\mathop{{}\mathbb{E}}} 
\newcommand{\eps}{\epsilon}
\DeclareMathOperator*{\argmax}{arg\,max}
\DeclareMathOperator*{\argmin}{arg\,min}
\DeclareMathOperator{\acost}{AccessCost}
\newcommand{\gms}{$\mathrm{GMSSC}$}
\newcommand{\ms}{$\mathrm{MSSC}$}
\newcommand{\Reals}{\mathbb{R}}
\begin{document}

\maketitle

\begin{abstract}
\noindent We consider a natural model of online preference aggregation, where sets of preferred items $R_1, R_2, \ldots, R_t$ along with a demand for $k_t$ items in each $R_t$, appear online. Without prior knowledge of $(R_t, k_t)$, the learner maintains a ranking $\pi_t$ aiming that at least $k_t$ items from $R_t$ appear high in $\pi_t$. This is a fundamental problem in preference aggregation with applications to, e.g., ordering product or news items in web pages based on user scrolling and click patterns. The widely studied \emph{Generalized Min-Sum-Set-Cover} (GMSSC) problem serves as a formal model for the setting above. GMSSC is NP-hard and the standard application of no-regret online learning algorithms is computationally inefficient, because they operate in the space of rankings. In this work, we show how to achieve low regret for GMSSC in polynomial-time. We employ dimensionality reduction from rankings to the space of doubly stochastic matrices, where we apply Online Gradient Descent. A key step is to show how subgradients can be computed efficiently, by solving the dual of a configuration LP. Using oblivious deterministic and randomized rounding schemes, we map doubly stochastic matrices back to rankings with a small loss in the GMSSC objective. 
\end{abstract}

\section{Introduction}
\label{sec:intro}
In applications where items are presented to the users sequentially (e.g., web search, news, online shopping, paper bidding), the item ranking is of paramount importance (see e.g., \cite{StreeterGK09,CKMS01,FSR18,DBLP:journals/jmlr/YasutakeHTT12,PT18}). More often than not, only the items at the first few slots are immediately visible and the users may need to scroll down, in an attempt to discover items that fit their interests best. If this does not happen soon enough, the users get disappointed and either leave the service (in case of news or online shopping, see e.g., the empirical evidence presented in \cite{DGMM20}) or settle on a suboptimal action (in case of paper bidding, see e.g., \cite{GP13}). 


To mitigate such situations and increase user retention, modern online services highly optimize item rankings based on user scrolling and click patterns. Each user $t$ is typically represented by her set of preferred items (or item categories) $R_t$\,. The goal is to maintain an item ranking $\pi_t$ online such that each new user $t$ finds enough of her favorite items at relatively high positions in $\pi_t$ (``enough'' is typically user and application dependent). A typical (but somewhat simplifying) assumption is that the user dis-utility is proportional to how deep in $\pi_t$ the user should reach before that happens. 

The widely studied \emph{Generalized Min-Sum Set Cover} (\gms{}) problem (see e.g., \cite{Im16} for a short survey) provides an elegant formal model for the practical setting above. In (the offline version of) \gms{}, we are given a set $U = \{1, \ldots, n\}$ of $n$ items and a sequence of requests $R_1, \ldots, R_T \subseteq U$. Each request $R \subseteq U$ is associated with a demand (or covering requirement) $\mathrm{K}(R) \in \{1, \ldots, |R|\}$. The \emph{access cost} of a request $R$ wrt. an item ranking (or permutation) $\pi$ is the index of the $\mathrm{K}(R)$-th element from $R$ in $\pi$. Formally, 
\begin{equation}\label{eq:access_cost}
\acost(\pi, R) = \{\text{the first index up to which } \mathrm{K}(R) \text{ elements of }R \text{ appear in } \pi\}.  
\end{equation}
The goal is to compute a permutation $\pi^\ast \in [n!]$ of the items in $U$ with minimum total access cost, i.e., $\pi^\ast = \argmin_{\pi \in [n!]}\sum_{t=1}^T \acost(\pi,R_t)$. 

Due to its mathematical elegance and its connections to many practical applications,  \gms{} and its variants have received significant research attention \cite{HA05,AGY09,AG11,ImNZ16}. The special case where the covering requirement is $\mathrm{K}(R_t)=1$ for all requests $R_t$ is known as \emph{Min-Sum Set Cover} (\ms). \ms{} is NP-hard, admits a natural greedy $4$-approximation algorithm and is inapproximable in polynomial time within any ratio less than $4$, unless $\mathrm{P}=\mathrm{NP}$ \cite{FLP04}. Approximation algorithms for  \gms{} have been considered in a sequence of papers \cite{BGK10,SW11,ImSZ14} with the state of the art approximation ratio being $12.5$. Closing the approximability gap, between $4$ and $12.5$, for \gms{} remains an interesting open question. 

\noindent\textbf{Generalized Min-Sum Set Cover and Online Learning.} 
Virtually all previous work on \gms{} (the recent work of \cite{FKKSV20} is the only exception) assumes that the algorithm knows the request sequence and the covering requirements well in advance. However, in the practical item ranking setting considered above, one should maintain a high quality ranking online, based on little (if any) information about the favorite items and the demand of new users. 
%
%

Motivated by that, we study \gms{} as an \textit{online learning} problem \cite{H16}. I.e., we consider a $\textit{learner}$ that selects permutations over time (without knowledge of future requests), trying to minimize her total access cost, and an adversary that selects requests $R_1,\ldots,R_T$ and their covering requirements, trying to maximize the learner's total access cost. Specifically, at each round $t \geq 1$,
\begin{enumerate}
    \item The learner selects a permutation $\pi_t$ over the $n$ items, i.e., $\pi_t \in [n!]$.
    
    \item The adversary selects a request $R_t$ with
    covering requirement $\mathrm{K}(R_t)$.
    
    \item The learner incurs a cost equal to $\acost(\pi_t,R_t)$. 
\end{enumerate}

Based on the past requests $R_1,\ldots,R_{t-1}$ only, an \textit{online learning algorithm} selects (possibly with the use of randomization) a permutation $\pi_t$ trying to achieve a total (expected) access cost as close as possible to the total access cost of the optimal permutation $\pi^\ast$. If the cost of the online learning algorithm is at most $\alpha$ times the cost of the optimal permutation, the algorithm is $\alpha$\textit{-regret} \cite{H16}. If $\alpha = 1$, the algorithm is \textit{no-regret}. In this work, we investigate the following question:


\begin{question}\label{q:main}
Is there an online learning algorithm for \gms{} that 
runs in polynomial time and 
achieves $\alpha$-regret, for some small constant $\alpha \geq 1$?
\end{question}

Despite a huge volume of work on efficient online learning algorithms and the rich literature on approximation algorithms for \gms{}, Question~\ref{q:main} remains challenging and wide open. 
%
Although the \emph{Multiplicative Weights Update} (MWU) algorithm, developed for the general problem of \emph{Learning from Expert Advice}, achieves no-regret for \gms{}, it does not run in polynomial-time. In fact, MWU treats each permutation as a different expert and maintains a weight vector of size $n!$. Even worse, this is inherent to \gms{}, due to the inapproximability result of \cite{FLP04}. Hence, unless $\mathrm{P} = \mathrm{NP}$, MWU's exponential requirements could not be circumvented by a more clever \gms{}-specific implementation, because any polynomial-time $\alpha$-regret online learning algorithm can be turned into a polynomial-time $\alpha$-approximation algorithm for \gms.  
Moreover, the results of \cite{KKL07} on obtaining computationally efficient $\alpha$-regret online learning algorithms from known polynomial time $\alpha$-approximation algorithms for NP-hard optimization problems do not apply to optimizing non-linear objectives (such as the access cost in \gms{}) over permutations.

\noindent\textbf{Our Approach and Techniques.} 
Departing from previous work, which was mostly focused on black-box reductions from polynomial-time algorithms to polynomial-time online learning algorithms, e.g., \cite{KV03,KKL07}, we carefully exploit the structure of permutations and \gms{}, and present polynomial-time low-regret online learning deterministic and randomized algorithms for \gms{}, based on dimensionality reduction and Online Projected Gradient Descent. 

Our approach consists of two major steps. The first step is to provide an efficient no-regret polynomial-time 
learning algorithm for a relaxation of \gms{} defined on doubly stochastic matrices. To optimize over doubly stochastic matrices, the learner needs to maintain only $n^2$ values, instead of the $n!$ values required to directly describe distributions over permutations. This \emph{dimensionality reduction} step allows for a polynomial-time no-regret online algorithm for the relaxed version of \gms. 

The second step is to provide computationally efficient (deterministic and randomized) online rounding schemes that map doubly stochastic matrices back to probability distributions over permutations. The main challenge is to guarantee that the expected access cost of the (possibly random) permutation obtained by rounding is within a factor of $\alpha$ from the access cost of the doubly stochastic matrix representing the solution to the relaxed problem. Once such a bound is established, it directly translates to an $\alpha$-regret online learning algorithm with respect to the optimal permutation for \gms{}. Our approach is summarized in Figure \ref{fig:general_scheme}.

\begin{figure}
\centerline{\includegraphics{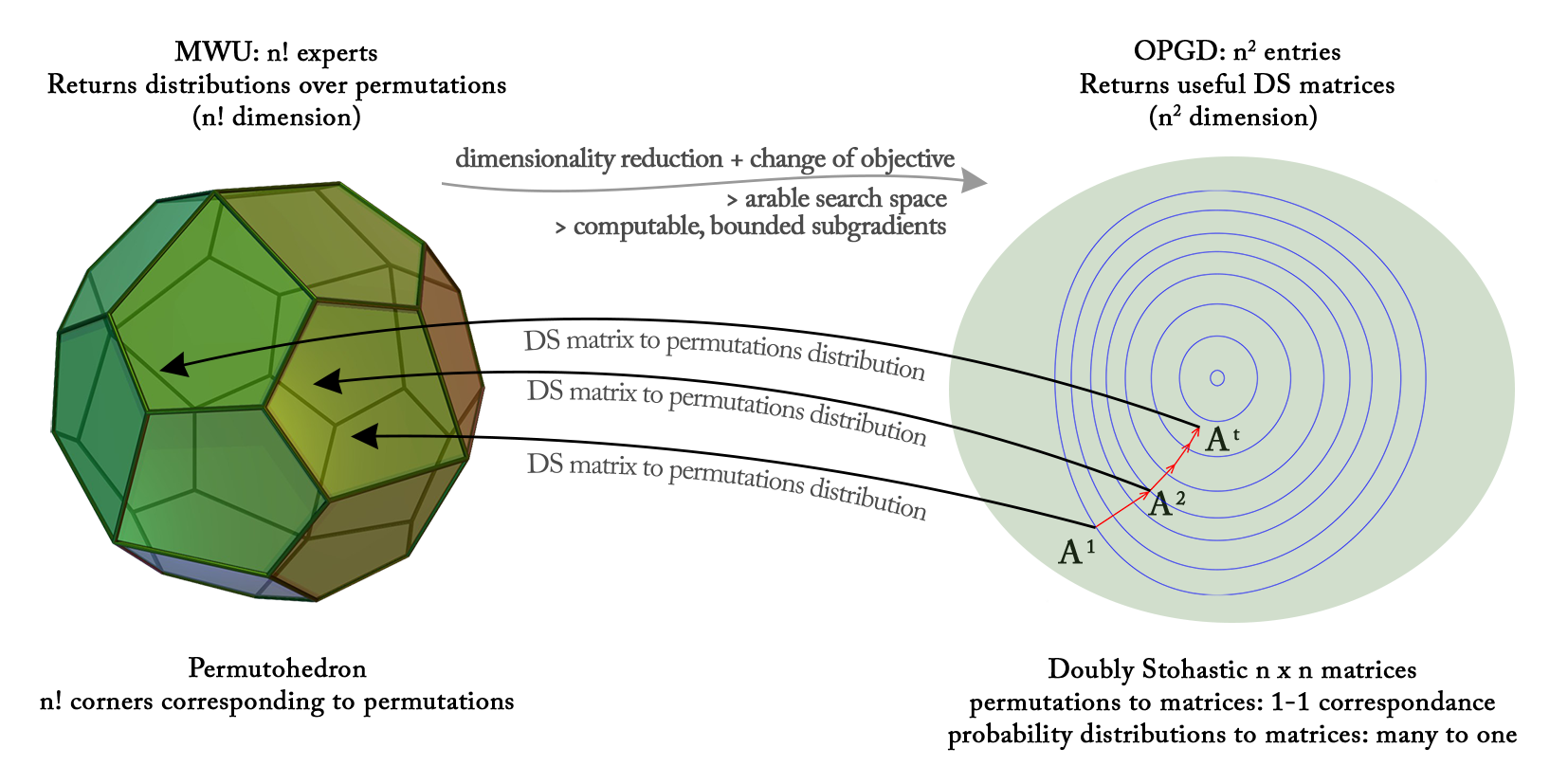}}
\caption{Our general approach, which is independent of the specific variant of \gms{}.}
\label{fig:general_scheme}
\end{figure}

\noindent\textbf{Designing and Solving the Relaxed Online Learning Problem.} 
For the relaxed version of \gms{}, we note that any permutation $\pi$ corresponds to an \emph{integral} doubly stochastic matrix $A^\pi$, with $A^\pi[i, j] = 1$ iff $\pi(j) = i$. Moreover for any request $R$, each doubly stochastic matrix is associated with a \textit{fractional access cost}. For integral doubly stochastic matrices, the fractional access cost is practically identical to the access cost of \gms{} in the respective permutation. 



The fractional access cost is given by the optimal solution of an (exponentially large) configuration linear program (LP) that relaxes \gms{} to doubly stochastic matrices (see also \cite{ImSZ14}), and is a convex function. Thus, we can use \textit{Online Projected Gradient Descent} (OPGD) \cite{Z03} to produce a \textit{no-regret} sequence of doubly stochastic matrices for the \gms{} relaxation. However, the efficient computation of the subgradient is far from trivial, due to the exponential size of the configuration LP. A key technical step is to show that the subgradient of the configuration LP can be computed in polynomial time, by solving its dual (which is of exponential size, so we resort to the elipsoid method and use an appropriate separation oracle).

%
%

\noindent\textbf{Our Results.} 
In nutshell, we resolve Question~\ref{q:main} in the affirmative. In addition to solving the relaxed version of \gms{} by a polynomial-time no-regret online learning algorithm, as described above, we present a \textit{polynomial-time randomized rounding scheme} that maps any doubly stochastic matrix to a probability distribution on permutations. The expected access cost of such a probability distribution is at most $28$ times the fractional access cost of the corresponding doubly stochastic matrix. Consequently, a $28$-regret \textit{polynomial-time randomized online learning algorithm} for \gms{} can be derived by applying, in each round, this rounding scheme to the doubly stochastic matrix $A^t$, produced by OPGD. For the important special case of \ms{}, we improve the regret bound to $11.713$ via a similar randomized rounding scheme that
exploits the fact that $\mathrm{K}(R)=1$ for all requests. 

We also present a \textit{polynomial-time deterministic rounding scheme} mapping any (possibly fractional) doubly stochastic matrix to permutations. As before, applying this scheme to the sequence of doubly stochastic matrices produced by OPGD for the relaxation of \gms{} leads to a \textit{polynomial-time deterministic online learning algorithm} with regret $2 \max_{t}|R_t|$ for \ms{}. Such a nontrivial upper bound on the regret of deterministic online learning algorithms is rather surprising.  Typically, learners that select their actions deterministically fail to achieve any nontrivial regret bounds (e.g., recall that in Learning From Expert Advice, any deterministic online algorithm has $\Omega(\# \text{experts})$ regret, which in case of \ms{} is $n!$). 
Although $2 \max_{t}|R_t|$ is not constant, one should expect that the requests are rather small in most practical applications. The above result is approximately tight, since any deterministic online learning algorithm must have regret at least $\max_{t} |R_t|/2$ \cite[Theorem~1.1]{FKKSV20}.
We should also highlight that the positive results of \cite{FKKSV20}
%
%
do not imply the existence of computationally efficient online learning algorithms for \ms{}, because their approach is based on the MWU algorithm and uses a state space of $n!$. The state of the art and our results (in bold) are summarized below.

 \begin{tabular}{||c c c c||} 
 \hline
 & Running Time & Upper Bound (Regret) & Lower Bound (Regret) \\ [0.5ex]
 \hline\hline
 \textit{GMSSC}&
Exponential ($\mathrm{MWU}$) & 1 & 1 \\
 \hline
 \textit{GMSSC}&
 Polynomial & $\boldsymbol{28}$ & 4 (any polynomial time) \\
 \hline
 \textit{MSSC}&
 Polynomial & $\boldsymbol{11.713}$ & 4 (any polynomial time)\\
 \hline
 \textit{MSSC}&
 Exponential (deterministic) & $2\cdot \max_{t}|R_t|$ & $\frac{\max_{t}|R_t|}{2}$ (any deterministic) \\
 \hline
 \textit{MSSC}&
 Polynomial (deterministic) & $ \boldsymbol{2\cdot \max_{t}|R_t|}$ & $\frac{\max_{t}|R_t|}{2}$ (any deterministic) \\
 \hline
\end{tabular}

\noindent\textbf{Related Work.} Our work relates with the long line of research concerning
the design of time-efficient online learning algorithms in various combinatorial domains in which the number of possible actions is exponentially large. Such domains include online routing \cite{HS97,AK08}, selection of permutations \cite{TW00,YHKSTT11,A14,HW07}, selection of binary search trees \cite{TM03}, submodular minimization/maximization \cite{HK12a,JB11,SG08}, matrix completion \cite{HKS12},
contextual bandits \cite{ALLS14,DHKKLRZ11} and many more.

Apart from the above line of works, concerning the design of time-efficient online learning algorithms in specific settings, another line of research studies the design of online learning algorithms considering \textit{black-box access} in offline algorithms \cite{KV03,RW17,S12,KWK10,BB06,Syr17,HK16,KKL07,FujitaHT13,Garber17,HWZ18}. In their seminal work \cite{KV03}, Kalai et al. showed how
a polynomial-time algorithm solving optimally the underlying combinatorial problem, can be converted into a  no-regret
polynomial-time online learning algorithm.
The result of Kalai et al. was subsequently improved \cite{RW17,S12,KWK10} for settings in which the underlying problem can be (optimally) solved by a specific approach, such as dynamic programming. Although there do not exist such general reductions for $\alpha$-approximation (offline) algorithms (without taking into account the combinatorial structure of each specific setting \cite{HK16}), Kakade et al. presented such a reduction for the (fairly general) class of \textit{linear optimization problems} \cite{KKL07}. Their result was subsequently improved by \cite{FujitaHT13,Garber17,HWZ18}. We remark that
the above results do not apply in our setting since \gms{} can neither be optimally solved in polynomial-time nor is a  
\textit{linear optimization problem}.

Finally our works also relates with a recent line of research
studying time-efficient online learning algorithms in settings related to selection of permutations and rankings \cite{YHKSTT11,A14,HW07,StreeterGK09,DBLP:journals/jmlr/YasutakeHTT12}. The setting considered in
\cite{YHKSTT11,A14,HW07} is very similar to \gms{}
with the difference that once request $R_t$ is revealed, the learner pays the sum of the positions of $R_t$'s elements in permutation $\pi_t$. In this case the underlying 
combinatorial optimization problem can be solved in polynomial-time meaning that the reduction of \cite{KV03} produces a time-efficient no-regret online learning algorithm. As a result, all the above works focus on improving the vanishing rate of time-average regret.
The setting considered in \cite{StreeterGK09} is based on the
\textit{submodular maximization problem}. In particular, the number of available positions is less than the number of elements, while the cost of the selected assignment
depends on the set of elements assigned to the slots (their order does not matter). Although this
problem is NP-hard, it admits an $(1-1/e)$-approximation algorithm which is matched by the presented online learning algorithm. Finally in \cite{DBLP:journals/jmlr/YasutakeHTT12}, the cost of the selected permutation is its distance from a permutation selected by the adversary. In this case the underlying combinatorial optimization problem admits an offline $11/9$-approximation algorithm, while a polynomial-time online learning algorithm with $3/2$-regret is presented. We note that \gms{} admits a fairly more complicated combinatorial structure from the above settings and this is indicated by its $4$ inapproximability result.

\section{Definitions and Notation}
\label{sec:preliminaries}

\begin{definition}[Subgradient]\label{d:subgradients}
Given a function $f:D \mapsto \Reals$, with $D \subseteq \Reals^n$, a vector $g \in \Reals^n$ is a subgradient of $f$ at point $x\in \Reals^n$, denoted $g \in \partial F(x)$, if $f(y) \geq f(x) + g^\top (y -x)~$, for all $y \in D$.
\end{definition}

A matrix $A \in [0,1]^{n \times n}$ is \emph{doubly stochastic}, if (i) $A_{ij} \geq 0$, for all $1 \leq i,j \leq n$, (ii) $\sum_{i=1}^nA_{ij}=1$, for all $1 \leq j \leq n$, and (iii) $\sum_{j=1}^nA_{ij}=1$, for all $1 \leq i \leq n$. We let $\mathrm{DS}$ denote the set of $n \times n$ doubly stochastic matrices. 

Any permutation $\pi \in [n!]$ can be represented by an \textit{integral} doubly-stochastic $A^\pi$, where $A_{ij}^\pi=1$ iff $\pi(j) = i$. Under this representation, the access cost of \gms{}, defined in (\ref{eq:access_cost}), becomes:
\begin{equation}\label{eq:acost}
\acost(\pi , R) = \sum_{i=1}^n \min \left\{ 1, {\left (\mathrm{K(R)} - \sum_{j=1}^{i-1} \sum_{e \in R} A_{ej}^\pi \right)\!\!}_+\,\,\, \right\} \,,  
\end{equation}
where we define $(x-y)_{+} = \max\{ x-y, 0 \}$.

A key notion for our algorithms and analysis is that of \emph{configurations}. Given a request $R \subset U$, a \emph{configuration} $F$ is an assignment of the elements $e \in R$ to positions $j \in [n]$ such that no two elements $e,e' \in R$ share the same position. Intuitively, a configuration wrt. a request $R$ is the set of all permutations $\pi \in [n!]$ with the elements of $R$ in the exact same positions as indicated by $F$. As a result, all permutations $\pi \in [n!]$ that agree with a configuration $F$ wrt. a request $R$ have the same $\acost(\pi,R)$. In the following, $\mathrm{F}(R)$ denotes the set of all configurations wrt. a request $R$ and $C_F$ denotes the access cost $\acost(\pi,R)$ of any permutation $\pi \in [n!]$ that agrees with the configuration $F \in \mathrm{F}(R)$. 

\begin{example}\label{e:1}
Let  $R = \{2,5,7\}$ with $\mathrm{K}(R)=2$. The configuration $F_1 =\{(2,3) , (5,1), (7,10)\}$ stands for the set of permutations $\pi \in [n!]$ in which (i) $\pi(3) = 2$, (ii) $\pi(1) = 5$, and (iii) $\pi(10) = 7$. The configuration $F_1$ is valid (i.e., $F_1 \in \mathrm{F}(R)$), because no elements of $R$ share the same position. Moreover, $\mathrm{C}_{F_1} = 3$, because any permutation $\pi$ agreeing with $F$ has cost $3$ for $\mathrm{K}(R)=2$. Similarly, for the configuration $F_2 =\{(2,3), (5,1), (7,2)\}$, $C_{F_2} = 2$.
\end{example}

\section{Solving a Relaxation of  Generalized Min-Sum Set Cover}
\label{sec:intermediate}
Next, we present an online learning problem for a relaxed version of \gms{} in the space of doubly stochastic matrices. Specifically, we consider an online learning setting where, in each round $t \geq 1$, 
\begin{enumerate}
    \item The learner selects a doubly stochastic matrix $A^t \in \mathrm{DS}$.
    
    \item The adversary selects a request $R_t$ with covering requirements $\mathrm{K}(R_t)$.
    
    \item The learner incurs the \textit{fractional access cost} $\mathrm{FAC}_{R_t}(A^t)$ presented in Definition~\ref{d:artificial_cost}.
\end{enumerate}

\begin{definition}[Fractional Access Cost]\label{d:artificial_cost} 
Given a request $R$ with covering requirements $\mathrm{K}(R)$, the fractional access cost of a doubly stochastic matrix $A$, denoted as $\mathrm{FAC}_{R}(A)$ is the value of the following linear program: 
\begin{equation}
\tag{FLP}\label{eq:ALP}
\begin{array}{lr@{}ll}
\mbox{\emph{minimize}}  & \displaystyle\sum _{F \in \mathrm{F}(R)} C_{F} \cdot y_F &\,\,+\,\, \displaystyle\frac{n^4}{\epsilon} \cdot \sum\limits_{e \in R}
\sum\limits_{j=1}^n |A_{ej} - \sum\limits_{F:(e,j) \in F} y_F| 
&\\
\mbox{\emph{subject to}}& \displaystyle\sum\limits_{\mathrm{F} \in \mathrm{F}(R)} y_F \,&= 1 \\
&y_F\, &\geq 0, \,\,\forall F \in \mathrm{F}(R)
\end{array}
\end{equation}
\end{definition}

We always assume a fixed accuracy parameter $\epsilon$ (see also Theorem~\ref{t:no-regret_artificial_cost} about the role of $\epsilon$). Hence, for simplicity, we always ignore the dependence of $\mathrm{FAC}_{R}(A)$ on $\epsilon$. We should highlight that we need to deviate from the configuration LP of \cite[Sec.~2]{ImSZ14}, because \textit{OPGD} requires an upper bound in the subgradient's norm. The $n^4$ term in \eqref{eq:ALP} was appropriately selected so as to ensure that the \textit{access cost} of the probability distribution on permutations produced by a doubly stochastic matrix is upper bounded by its \textit{fractional access cost} (see Section~\ref{sec:GMSSC}).  

An important property of the fractional access cost in  Definition~\ref{d:artificial_cost} is that for all integral doubly stochastic matrices, it is bounded from above by the access cost of \gms{} in \eqref{eq:acost}. For that, simply note that a feasible solution is setting $y_F=1$ only for the configuration that ``agrees'' in the resources of $R$ with the permutation  of the integral matrix $A$. 

\begin{corollary}\label{c:basic}
For any integral doubly stochastic matrix $A^\pi$ corresponding to a permutation $\pi \in [n!]$,
\[\mathrm{FAC}_R(A^\pi) \leq
  \acost(\pi , R).\]
\end{corollary}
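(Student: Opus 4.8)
The plan is to exhibit an explicit feasible solution to the linear program (FLP) whose objective value equals exactly $\acost(\pi,R)$, which immediately certifies that the optimum $\mathrm{FAC}_R(A^\pi)$ can only be smaller or equal. Since $A^\pi$ is integral, each column $j$ has exactly one entry equal to $1$, so the permutation $\pi$ places each element $e \in R$ at a unique position; collect these placements into a single configuration $F^\star \in \mathrm{F}(R)$ defined by $(e,j) \in F^\star$ iff $A^\pi_{ej} = 1$ (equivalently $\pi(j)=e$). This $F^\star$ is a valid configuration precisely because $\pi$ is a permutation and hence no two elements of $R$ share a position.

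First I would set $y_{F^\star} = 1$ and $y_F = 0$ for all other $F \in \mathrm{F}(R)$. This trivially satisfies both constraints of (FLP): the equality $\sum_{F} y_F = 1$ holds, and all $y_F \geq 0$. Next I would evaluate the objective at this point. The first term $\sum_F C_F\, y_F$ collapses to $C_{F^\star}$, and by the definition of $C_F$ as the access cost of any permutation agreeing with the configuration, we have $C_{F^\star} = \acost(\pi,R)$, since $\pi$ itself agrees with $F^\star$. The second, penalty term $\frac{n^4}{\epsilon}\sum_{e\in R}\sum_{j=1}^n \bigl|A^\pi_{ej} - \sum_{F:(e,j)\in F} y_F\bigr|$ must vanish: for the chosen $y$, the inner sum $\sum_{F:(e,j)\in F} y_F$ equals $1$ exactly when $(e,j)\in F^\star$ (i.e.\ $A^\pi_{ej}=1$) and equals $0$ otherwise (i.e.\ $A^\pi_{ej}=0$), so each absolute value is $|A^\pi_{ej}-A^\pi_{ej}|=0$. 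Hence the penalty contributes nothing and the total objective value equals $\acost(\pi,R)$.

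Finally, since (FLP) is a minimization problem and we have produced a feasible solution achieving objective value $\acost(\pi,R)$, the optimal value satisfies $\mathrm{FAC}_R(A^\pi)\leq \acost(\pi,R)$, which is exactly the claimed inequality. The argument is short and essentially self-contained; the only point requiring a moment of care is verifying that the penalty term genuinely vanishes, which hinges on the fact that for an integral doubly stochastic matrix the marginals $\sum_{F:(e,j)\in F} y_F$ reproduce the entries $A^\pi_{ej}$ exactly under this single-configuration choice. I would not expect any real obstacle here: the whole point of the penalty's construction is that integral matrices admit a zero-penalty representation, so the ``hard part'' is merely stating cleanly why $F^\star$ is valid and why $C_{F^\star}$ coincides with $\acost(\pi,R)$.
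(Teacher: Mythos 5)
Your proposal is correct and is exactly the argument the paper gives (in the sentence preceding the corollary): take the single configuration $F^\star$ induced by $\pi$ on $R$, set $y_{F^\star}=1$, observe the penalty term vanishes for an integral matrix, and read off the objective value $C_{F^\star}=\acost(\pi,R)$ as an upper bound on the minimum. You have merely written out the feasibility and zero-penalty checks that the paper leaves implicit.
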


For  $A^1,A^2\in \mathrm{DS}$, it is $\mathrm{FAC}_{R_t}\left(\lambda A^1 + (1-\lambda) A^2\right) \leq \lambda \cdot \mathrm{FAC}_{R_t}\left(A^1\right)
+ ( 1-\lambda) \cdot \mathrm{FAC}_{R_t}\left(A^2\right)
$, meaning that $\mathrm{FAC}_{R_t}(\cdot)$ is a convex function in the space of doubly stochastic matrices. Since doubly stochastic matrices form a convex set, \textit{Online Projected Gradient Descent} \cite{Z03} is a \textit{no-regret} online learning algorithm for the relaxed version of \gms{}.

\subsection{Implementing Online Gradient Descent in Polynomial-time}

\textit{Online Gradient Descent} requires, in each round $t$, the computation of a subgradient of the fractional access cost $\mathrm{FAC}_{R_t}(A_t)$ (see also Definition~\ref{d:subgradients}). Specifically, given a request $R$ and a doubly stochastic matrix $A$, a vector $g \in \Reals^{n^2}$ belongs to the subgradient $\partial \mathrm{FAC}_{R}(A)$, if for any 
$B\in \mathrm{DS}$,
\begin{equation}\label{eq:subgradient_doubly_stochastic}
\mathrm{FAC}_R(B) \geq \mathrm{FAC}_R(A) + g^\top (B - A)\,,
\end{equation}
where we slightly abuse the notation and think of 
matrices $A$ and $B$ as vectors in $[0,1]^{n^2}$.

Computing a subgradient $g \in \partial \mathrm{FAC}_{R}(A)$ in polynomial-time is far from trivial, because the fractional access cost $\mathrm{FAC}_{R}(A)$ does not admit a closed form, since its value is determined by the optimal solution to \eqref{eq:ALP}. Moroever, \eqref{eq:ALP} has exponentially many variables $y_F$, one for each  configuration $F \in \mathrm{F}(R)$. We next show how to compute a subgradient $g \in \partial \mathrm{FAC}_{R}(A)$ by using linear programming duality and solving the \textit{dual} of \eqref{eq:ALP}, which is presented below: 
%
\vspace{-1mm}
\begin{equation}\label{eq:dual}
\begin{array}{lr@{}ll}
\text{maximize}  & \displaystyle\lambda + \sum_{e \in R}\sum_{j=1}^n A_{ej} \cdot \lambda_{ej}
&\\
\text{subject to}& \displaystyle\lambda + \sum\limits_{(e,j)\in F}\lambda_{ej}\,&\leq C_F, \text{ for all } F \in \mathrm{F}(R)\\
&|\lambda_{ej}|\,&\leq  n^4/\eps
\end{array}
\end{equation}

\begin{lemma}\label{l:subgradients_dual}
For any request $R$ and any stochastic matrix $A\in \mathrm{DS}$, let $g \in \Reals^{n^2}$ denote the vector
consisting of the $n^2$ values of the variables $\lambda^\ast_{ej}$
in the optimal solution of \eqref{eq:dual}. Then, for any 
$B \in \mathrm{DS}$,
\[\mathrm{FAC}_R(B) \geq \mathrm{FAC}_R(A) + g^\top (B - A)\]
\noindent Moreover the Euclidean norm of $g$ is upper bounded by $n^5/\eps$, i.e., $\norm{g}_2 \leq n^5/\eps$.
\end{lemma}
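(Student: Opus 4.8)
The plan is to combine strong LP duality with the key structural observation that the matrix $A$ enters the dual \eqref{eq:dual} only through its objective, and does so linearly, whereas the dual feasible region does not depend on $A$ at all. Once this is in place, the subgradient inequality is just the statement that a pointwise maximum of affine functions is supported at every point by its active affine piece.

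First I would invoke strong duality. The primal \eqref{eq:ALP} is feasible (e.g.\ place all mass on a single configuration, which satisfies $\sum_F y_F = 1$) and its objective is bounded below by $0$, since $C_F \geq 0$, $y_F \geq 0$ and the absolute-value terms are nonnegative. Hence the primal optimum is finite and, by LP duality, equals the optimum of \eqref{eq:dual}, which is attained. Writing $\mathcal{D}$ for the dual feasible set, that is, the pairs $(\lambda, \{\lambda_{ej}\})$ with $\lambda + \sum_{(e,j)\in F}\lambda_{ej} \leq C_F$ for all $F \in \mathrm{F}(R)$ and $|\lambda_{ej}| \leq n^4/\eps$, I would emphasize that $\mathcal{D}$ makes no reference to $A$. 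Strong duality therefore gives
\[
\mathrm{FAC}_R(A) \;=\; \max_{(\lambda,\{\lambda_{ej}\}) \in \mathcal{D}} \Big( \lambda + \sum_{e \in R}\sum_{j=1}^n A_{ej}\,\lambda_{ej}\Big),
\]
exhibiting $\mathrm{FAC}_R(\cdot)$ as a pointwise maximum over a fixed index set of functions that are affine in the matrix argument.

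The subgradient inequality is then immediate. Let $(\lambda^\ast, \{\lambda_{ej}^\ast\})$ be an optimal solution of \eqref{eq:dual} for $A$, so that $\mathrm{FAC}_R(A) = \lambda^\ast + \sum_{e,j} A_{ej}\lambda_{ej}^\ast$, and let $g \in \Reals^{n^2}$ collect the values $\lambda_{ej}^\ast$, with the coordinates for $e \notin R$ set to $0$. Since feasibility in $\mathcal{D}$ is independent of the objective matrix, the same pair is feasible for the dual evaluated at any $B \in \mathrm{DS}$ and hence lower-bounds its optimum; applying strong duality again to the primal/dual pair associated with $B$ yields
\[
\mathrm{FAC}_R(B) \;\geq\; \lambda^\ast + \sum_{e \in R}\sum_{j=1}^n B_{ej}\lambda_{ej}^\ast \;=\; \mathrm{FAC}_R(A) + \sum_{e,j}(B_{ej}-A_{ej})\lambda_{ej}^\ast \;=\; \mathrm{FAC}_R(A) + g^\top(B-A),
\]
which is exactly \eqref{eq:subgradient_doubly_stochastic}. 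The norm bound is then a one-line computation: $g$ has at most $n^2$ nonzero coordinates, each satisfying $|\lambda_{ej}^\ast| \leq n^4/\eps$ by dual feasibility, so $\norm{g}_2 \leq \sqrt{n^2}\cdot n^4/\eps = n^5/\eps$.

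The only non-mechanical point, and the step I would flag as the crux, is recognizing that the $A$-dependence is confined to the dual objective while $\mathcal{D}$ stays fixed as $A$ varies; this is precisely what turns the generic ``subgradient of a maximum of affine functions'' principle into the explicit formula $g = (\lambda_{ej}^\ast)$ and lets the optimal dual solution at $A$ serve as a feasible (hence lower-bounding) solution at every other $B$. Verifying that strong duality applies, namely primal feasibility and boundedness, is routine, as is the final norm estimate.
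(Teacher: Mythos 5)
Your proposal is correct and follows essentially the same route as the paper's proof: strong duality, the observation that $A$ enters only the dual objective so the optimal dual solution at $A$ remains feasible for the dual at $B$, and the norm bound from the box constraints $|\lambda_{ej}|\leq n^4/\eps$. The extra remarks on verifying primal feasibility/boundedness and the ``maximum of affine functions'' framing are sound additions but do not change the argument.
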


Lemma~\ref{l:subgradients_dual} shows that a subgradient $g \in \partial \mathrm{FAC}_{R}(A)$ can be obtained from the solution to the dual LP \eqref{eq:dual}. Although \eqref{eq:dual} has exponentially many constraints, we can solve it in polynomial-time by the ellipsoid method, through the use of an appropriate separation oracle.%
\footnote{Interestingly, \gms{} seems to be the most general version of min-sum-set-cover-like ranking problems that allow for an efficient subgradient computation through the dual of the configuration LP \eqref{eq:ALP}. E.g., for the version of Min-Sum-Set-Cover with submodular costs considered in \cite{AG11}, determining the feasibility of a potential solution to \eqref{eq:dual} is $\mathrm{NP}$-hard. This is true even for very special case where the cover time function used in \cite{AG11} is additive.} 
In fact, our separation oracle results from a simple modification of the separation oracle in \cite[Sec.~2.3]{ImSZ14} (see also Section~\ref{appendix_online_gradient_descent}). Now, the reasons for the particular form of fractional access cost in Definition~\ref{d:artificial_cost} become clear: (i) it allows for  efficient computation of the subgradients, and (ii) the dual constraints $|\lambda_{ej}|\leq n^4/\eps$ imply that the subgradient's norm is always bounded by $n^5/\eps$.

\begin{remark}
For the \textit{Min-Sum Set Cover problem}, the use of the ellipsoid method (for the computation of the subgradient vector) can be replaced by a more efficient quadratic-time algorithm (see Appendix~\ref{sec:MSSC_random}).
\end{remark}

Having established polynomial-time computation for the subgradients,  Online Projected Gradient Descent takes the form of Algorithm~\ref{alg:projected_gradient_descent} in our specific setting.

\begin{algorithm}[H]
  \caption{Online Projected  Gradient Decent in Doubly Stochastic Matrices}\label{alg:projected_gradient_descent}
  \begin{algorithmic}[1]
  \State Initially, the player selects the matrix $A^1 = 1/n \cdot 1_{n \times n}$.
  \ForAll{ rounds $t = 1 \cdots T$}
  \State The adversary selects a request $R_t \subseteq U$ with covering requirements $\mathrm{K}(R_t)$.
  \State The learner receives cost, $\mathrm{FAC}_{R_t}(A^t)$.
  
  \State The learner computes a subgradient $g_t \in \partial \mathrm{FAC}_{R_t}(A^t)$ by solving the dual of (\ref{eq:ALP}).

  \State The learner computes the matrix, $\hat{A} = A^t - 2\eps \cdot g_t/(n^{4.5}\sqrt{t})$.
  
  \State The learner adopts the matrix, $A^{t+1} = \argmin_{A \in \mathrm{DS}} \norm{A - \hat{A}}_{\mathrm{F}}$
  \EndFor
  \end{algorithmic}
\end{algorithm}

Step~$6$ of Algorithm~\ref{alg:projected_gradient_descent} is the \textit{gradient step}. In Online Projected Gradient Descent, this step is performed with step-size $D/(G\sqrt{t})$, where $D$ and $G$ are upper bounds on the diameter of the action space and on the Euclidean norm of the subgradients. In our case, the action space is the set of doubly stochastic matrices. Since $\max_{A,B \in \mathrm{DS}}\norm{A - B}_{\mathrm{F}} \leq 2\sqrt{n}$ the parameter $D = 2\sqrt{n}$,
and $G=n^5/\eps$, by Lemma~\ref{l:subgradients_dual}. Hence, our step-size is $2\eps/(n^{4.5}\sqrt{t})$. The projection step (Step~$7$) is implemented in polynomial-time, because projecting to doubly stochastic matrices is a convex problem \cite{FJFA13}. We conclude the section by plugging in the parameters $G = n^5/\eps$ and $D=2\sqrt{n}$ to the regret bounds of Online Projected Gradient Descent \cite{Z03}, thus obtaining Theorem~\ref{t:no-regret_artificial_cost}.

\begin{theorem}\label{t:no-regret_artificial_cost}
For any $\epsilon > 0$ and any request sequence $R_1,\ldots,R_T$, the sequence of doubly stochastic matrices $A^1,\ldots,A^T$ produced by Online Projected Gradient Descent (Algorithm~\ref{alg:projected_gradient_descent}) satisfies,
$\frac{1}{T}\sum_{t=1}^T \mathrm{FAC}_{R_t}(A^t) \leq \frac{1}{T}\min_{A \in \mathrm{DS}} \sum_{t=1}^T\mathrm{FAC}_{R_t}(A) + O\left (\frac{n^{5.5}}{\epsilon \sqrt{T}}\right)$.
\end{theorem}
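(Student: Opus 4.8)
The plan is to treat Theorem~\ref{t:no-regret_artificial_cost} as a direct corollary of the standard regret guarantee for Online Projected Gradient Descent (OPGD) \cite{Z03}: running OPGD on a sequence of convex losses over a convex set with step size $\eta_t = D/(G\sqrt{t})$ yields $\sum_{t=1}^T f_t(x_t) - \min_{x} \sum_{t=1}^T f_t(x) = O(DG\sqrt{T})$, where $D$ bounds the diameter of the action set and $G$ bounds the Euclidean norm of the subgradients. All that remains is to certify that our setting instantiates this theorem with the claimed parameters, after which the bound follows by substitution.

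First I would record the two structural prerequisites, both already noted in the text. The losses $\mathrm{FAC}_{R_t}(\cdot)$ are convex, since the value of the minimization LP \eqref{eq:ALP} is a convex function of $A$, and the action space $\mathrm{DS}$ is a convex set, being an intersection of halfspaces and hyperplanes. Hence OPGD is applicable, the projection in Step~$7$ is well defined, and the vectors $g_t$ computed in Step~$5$ are genuine subgradients by Lemma~\ref{l:subgradients_dual}.

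Second I would pin down the numerical parameters. For the diameter, every $A \in \mathrm{DS}$ has entries in $[0,1]$ and row sums equal to $1$, so $\norm{A}_{\mathrm{F}}^2 = \sum_{i,j} A_{ij}^2 \leq \sum_{i,j} A_{ij} = n$, giving $\norm{A}_{\mathrm{F}} \leq \sqrt{n}$; the triangle inequality then yields $\max_{A,B \in \mathrm{DS}} \norm{A - B}_{\mathrm{F}} \leq 2\sqrt{n}$, so $D = 2\sqrt{n}$. For the subgradient norm, Lemma~\ref{l:subgradients_dual} gives $\norm{g_t}_2 \leq n^5/\epsilon$, so $G = n^5/\epsilon$. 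I would then check that the step size in Step~$6$ is exactly Zinkevich's prescription: $D/(G\sqrt{t}) = 2\sqrt{n}\,\epsilon/(n^5\sqrt{t}) = 2\epsilon/(n^{4.5}\sqrt{t})$, matching the algorithm verbatim, so the black-box bound applies without any modification.

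Finally, substituting $D = 2\sqrt{n}$ and $G = n^5/\epsilon$ into the OPGD regret bound gives $\sum_{t=1}^T \mathrm{FAC}_{R_t}(A^t) - \min_{A \in \mathrm{DS}} \sum_{t=1}^T \mathrm{FAC}_{R_t}(A) = O(n^{5.5}\sqrt{T}/\epsilon)$, and dividing through by $T$ produces the claimed time-averaged bound $O(n^{5.5}/(\epsilon\sqrt{T}))$. There is no deep obstacle here, as the statement is essentially a plug-in corollary; the only points requiring genuine care are the diameter computation and the exact agreement between the algorithm's step size and Zinkevich's prescribed schedule, which together license the verbatim invocation of the known regret guarantee.
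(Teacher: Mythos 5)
Your proposal is correct and follows exactly the paper's own argument: the paper likewise obtains the theorem by plugging $D = 2\sqrt{n}$ and $G = n^5/\epsilon$ (from Lemma~\ref{l:subgradients_dual}) into the standard regret bound for Online Projected Gradient Descent of \cite{Z03}. Your additional verification of the diameter bound via $\norm{A}_{\mathrm{F}} \leq \sqrt{n}$ and of the step-size agreement is a welcome but minor elaboration of the same route.
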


\section{Converting Doubly Stochastic 
Matrices to Distributions on Permutations}
\label{sec:roundings}

Next, we present polynomial-time rounding schemes that map a doubly stochastic matrix back to a probability distribution on permutations. Our schemes ensure that the resulting permutation (random or deterministic) has access cost  
at most $\alpha$ times the fractional access cost  of  the corresponding doubly stochastic matrix. Combining such schemes with  Algorithm~\ref{alg:projected_gradient_descent}, we obtain polynomial-time $\alpha$-regret online learning algorithms for \gms{}.

Due to lack of space, we only present the deterministic rounding scheme, which is intuitive and easy to explain. Most of its analysis and the description of the randomized rounding schemes are deferred to the supplementary material. 

\begin{algorithm}[H]
  \caption{Converting Doubly Stochastic Matrices to Permutations}\label{alg:drand}
  \textbf{Input:} A doubly stochastic matrix $A \in \mathrm{DS}$, a parameter $r$ and a parameter $\alpha >0$.\\
  \textbf{Output:} A deterministic permutation $\pi_A \in [n!]$.
  \begin{algorithmic}[1]
  \State $\text{Rem} \leftarrow \{1,\ldots,n\}$
  
  \For{$k = 1$ to 
  $\left \lfloor{n/r}\right \rfloor$ }
  
    \State Let $R_k$ be any $(1+\alpha)$-approximate solution to the following problem: 
    \[\min_{R \subseteq \mathrm{Rem}: |R| = r}\,\,\sum_{i=1}^n{\left (1 - \sum_{j=1}^{i-1} \sum_{e \in R} A_{ej} \right)\!\!}_+\]
    
  \State Assign the elements of $R_k$ to positions $(k-1)\cdot r + 1, \ldots, k\cdot r$ of $\pi_A$ in any order. 
  
  \State $\text{Rem} \leftarrow \text{Rem} \setminus R_k$
  
  \EndFor
  \State \textbf{return} the resulting permutation $\pi_A \in [n!]$.
  \end{algorithmic}
\end{algorithm}

 Algorithm~\ref{alg:drand} aims to produce a permutation $\pi_A \in [n!]$ from the doubly stochastic matrix $A$ such that the $\acost(\pi_A,R)$ is approximately bounded by $\mathrm{FAR}_R(A)$ for any request $R$ with $|R| \leq r$ and $\mathrm{K}(R)=1$. Algorithm~\ref{alg:drand} is based on the following intuitive greedy criterion:

\begin{quote}
Assign to the first $r$ available positions of $\pi_A$ the elements of the request of size $r$ with minimum fractional cost of Definition~\ref{d:artificial_cost} wrt. the doubly stochastic matrix $A$. Then, remove these elements and repeat.\end{quote}

Unfortunately the greedy step above involves the solution to an $\mathrm{NP}$-hard optimization problem. Nevertheless, we can approximate it with an FPTAS (Fully Polynomial-Time Approximation Scheme). The (1+$\alpha$)-approximation algorithm used in Step~$3$ of Algorithm~\ref{alg:drand} runs in $\Theta(n^4r^3/\alpha^2)$ and is presented and analyzed in Section~\ref{s:DP}. Theorem~\ref{t:det-rounding} (proved in Section~\ref{sec:MSSC_deterministic}) summarizes the guarantees on the access cost of a permutation $\pi_A$ produced by Algorithm~\ref{alg:drand}.

\begin{theorem}\label{t:det-rounding}
Let $\pi_A$ denote the permutation produced by Algorithm~\ref{alg:drand} when the doubly stochastic matrix $A$ is given as input. 
Then for any request $R$ with $\mathrm{K}(R)=1$
and $|R| \leq r$,
\[\acost(\pi_A,R) \leq 2 (1+\eps)(1+\alpha)^2 r \cdot \mathrm{FAC}_{R}(A),\]
\noindent  with $\eps>0$ 
as in Definition~\ref{d:artificial_cost}. Moreover,
Step~$3$, 
can be implemented in $\Theta(n^4r^3/\alpha^2)$ steps.
\end{theorem}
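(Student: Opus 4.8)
The plan is to fix a request $R$ with $\mathrm{K}(R)=1$ and $|R|\le r$, and to bound $\acost(\pi_A,R)$ by tracking where the first element of $R$ lands in $\pi_A$. Since $\mathrm{K}(R)=1$, $\acost(\pi_A,R)$ is exactly the position of the first element of $R$ in $\pi_A$; if that element belongs to the $k^\ast$-th block (positions $(k^\ast-1)r+1,\dots,k^\ast r$), then $\acost(\pi_A,R)\le r\,k^\ast$. Thus everything reduces to proving $k^\ast=O\!\big((1+\eps)(1+\alpha)^2\,\mathrm{FAC}_R(A)\big)$, i.e. that the greedy partition reaches an element of $R$ within roughly $\mathrm{FAC}_R(A)$ blocks.

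First I would tie the greedy's inner objective to $\mathrm{FAC}$. Writing $\Phi(S):=\sum_{i=1}^n\big(1-\sum_{j=1}^{i-1}\sum_{e\in S}A_{ej}\big)_+$ for the quantity minimised in Step~$3$, note that, because $\mathrm{K}(R)=1$, $\Phi(S)$ is precisely formula~\eqref{eq:acost} evaluated on the marginals of $A$ (the cap $\min\{1,\cdot\}$ is vacuous, as each subtracted term is nonnegative). Decomposing $A$ into permutations (Birkhoff--von Neumann) yields a distribution $y$ over configurations whose marginals match $A$, and a union/Markov bound gives $\sum_F C_F y_F\ge\Phi(\text{marginals of }y)$ for every feasible $y$; since the $n^4/\eps$ penalty in \eqref{eq:ALP} forces the optimal solution's marginals to deviate from $A$ by a negligible amount, this upgrades to $\Phi(R)\le(1+\eps)\,\mathrm{FAC}_R(A)$. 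Controlling that penalty term (so that mismatched marginals cannot cheapen the objective) is one delicate point, but the $n^4$ factor dominates the Lipschitz sensitivity of $\Phi$ and makes it routine.

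Next comes the structural core. For every block $k\le k^\ast$ the set $R$ is still entirely contained in $\mathrm{Rem}$ (no earlier block touched it), so padding $R$ to size $r$ from $\mathrm{Rem}$ is a feasible candidate in Step~$3$; as $\Phi$ is nonincreasing under adding elements, $\Phi(\text{padded }R)\le\Phi(R)$, and the $(1+\alpha)$-approximation guarantee gives $\Phi(R_k)\le(1+\alpha)\,\Phi(R)$. I would then convert a small $\Phi(R_k)$ into early mass: since $\big(1-\sum_{j<i}\sum_{e\in R_k}A_{ej}\big)_+>\tfrac12$ at every position $i$ preceding the point where $R_k$ accumulates mass $\tfrac12$, that point lies below $2\Phi(R_k)\le 2(1+\alpha)\Phi(R)$, so each $R_k$ places at least half of its mass within the first $m:=\lceil 2(1+\alpha)\Phi(R)\rceil$ positions. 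The blocks $R_1,\dots,R_{k^\ast-1}$ are disjoint, and the total $A$-mass in any length-$m$ prefix equals $m$ (each column of $A$ sums to $1$); charging each block's early mass against this budget bounds $k^\ast$ by $O\big((1+\alpha)\Phi(R)\big)$. The main obstacle is sharpening this charging to the stated leading constant: the crude mass-budget loses a further factor, and recovering the clean $2$ requires a Smith-rule/exchange-style accounting (as in rounding fractional completion times), in which each unit of greedy delay is charged to distinct fractional mass of $R$. Combining the three steps gives $\acost(\pi_A,R)\le r\,k^\ast\le 2(1+\eps)(1+\alpha)^2 r\,\mathrm{FAC}_R(A)$.

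Finally, the running time. The Step~$3$ subproblem, minimising $\Phi$ over size-$r$ subsets of $\mathrm{Rem}$, is $\mathrm{NP}$-hard, so I would invoke the FPTAS of Section~\ref{s:DP}: a dynamic program over positions that discretises the per-position contributions at scale $\alpha$ computes a $(1+\alpha)$-approximate minimiser in $\Theta(n^4 r^3/\alpha^2)$ time (this discretisation is also a natural place for the second $(1+\alpha)$ factor to be absorbed). Since Step~$3$ dominates each of the $\lfloor n/r\rfloor$ iterations, the per-call bound $\Theta(n^4 r^3/\alpha^2)$ is exactly as claimed.
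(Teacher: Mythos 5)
Your overall architecture matches the paper's: reduce to bounding the block index $k^\ast$ of the first element of $R$, show $\Phi(R)\le(1+\eps)\,\mathrm{FAC}_R(A)$ via the Birkhoff--von Neumann/penalty argument (this is Corollary~\ref{cor:2} plus Lemma~\ref{l:comparison}), observe that the padded $R$ remains feasible for Step~3 in every iteration up to $k^\ast$ so that each greedy block satisfies $\Phi(R_k)\le(1+\alpha)\Phi(R)$ (your direct comparison is in fact slightly tighter than the paper's chain through $R_L$, which costs $(1+\alpha)^2$), and finally charge the blocks' mass against the column sums of $A$. All of that is sound.

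The gap is in the last, quantitative step, which is the real content of the theorem. Your ``half-mass'' charging --- each $R_k$ accumulates mass $\tfrac12$ by column $m=\lceil 2(1+\alpha)\Phi(R)\rceil$, and the prefix of length $m$ has total mass $m$ --- only yields $k^\ast\le 2m$, i.e.\ a leading constant of roughly $4$ (and closer to $6$ once the ceiling and the $+1$ are absorbed), not the claimed $2(1+\alpha)^2$. You acknowledge this and defer to an unspecified ``Smith-rule/exchange-style accounting,'' but that accounting is precisely the missing lemma, and it is not routine: the paper's Lemma~\ref{l:technical} introduces the truncated masses $B_{\ell i}$ (capping each block's cumulative mass at $1$), sets $C_i=\sum_\ell B_{\ell i}$, and exploits the two facts $\sum_i C_i = L$ and $C_i\le 1$ (the latter from disjointness of the blocks and the column sums of $A$) to derive $\tfrac1L\sum_{\ell}\mathrm{Cost}(A,R_\ell)=\tfrac1L\sum_i i\,C_i\ge L/2$, which is what produces the constant $2$. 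Without this averaging-over-all-blocks identity (rather than a per-block threshold argument), the stated bound $2(1+\eps)(1+\alpha)^2 r\cdot\mathrm{FAC}_R(A)$ is not established; you prove only an $O(r\cdot\mathrm{FAC}_R(A))$ bound with a worse constant. The running-time claim for Step~3 is handled the same way as in Section~\ref{s:DP} and is fine.
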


We now show how Algorithm~\ref{alg:projected_gradient_descent} and Algorithm~\ref{alg:drand} can be combined to produce a \textit{polynomial-time deterministic online learning algorithm} for \ms{} with regret roughly $2\max_{1\leq t\leq T}|R_t|$. For any adversarially selected sequence of requests  $R_1,\ldots, R_T$ with $\mathrm{K}(R_t) = 1$ and $|R_t| \leq r$,
the learner runs Algorithm~\ref{alg:projected_gradient_descent} \textit{in the background}, while at each round $t$ uses  Algorithm~\ref{alg:drand} to produce the permutation $\pi_{A^t}$ by the doubly stochastic matrix $A^t \in \mathrm{DS}$. Then,
\smallskip
\begin{eqnarray*}
\frac{1}{T}\sum_{t=1}^T \acost(\pi_{A^t},R_t)
&\leq&
\frac{1}{T} \cdot \sum_{t=1}^T 2(1+\eps)(1+\alpha)^2 r\cdot \mathrm{FAC}_{R_t}(A^t)\\
&\leq&
\frac{2r}{T}(1+\eps)(1+\alpha)^2 \cdot  \min_{A \in \mathrm{DS}}
\sum_{t=1}^T \mathrm{FAC}_{R^t}(A) + O\left(\frac{n^{5.5}}{\epsilon\sqrt{T}}\right)\\
&\leq&
\frac{2r}{T}(1+\eps)(1+\alpha)^2 \cdot
\sum_{t=1}^T \mathrm{FAC}_{R^t}(A^{\pi^\ast}) + O\left(\frac{n^{5.5}}{\epsilon\sqrt{T}}\right)\\
&\leq&
\frac{2r}{T}(1+\eps)(1+\alpha)^2 \cdot 
\sum_{t=1}^T \acost(\pi^\ast,R^t) + O\left(\frac{n^{5.5}}{\epsilon \sqrt{T}}\right)
\end{eqnarray*}
\vspace{-2mm}

The first inequality follows by Theorem~\ref{t:det-rounding}, the second by Theorem~\ref{t:no-regret_artificial_cost} and the last by Corollary~\ref{c:basic}.

Via the use of \textit{randomized rounding schemes} we can substantially improve both on the assumptions and the guarantee of Theorem~\ref{t:det-rounding}. Algorithm~\ref{alg:sampling_GMSSC} (presented in Section~\ref{sec:GMSSC}), describes such a scheme that converts any doubly stochastic matrix $A$ to a probability distribution over permutations, while Theorem~\ref{t:GMMSC-rounding} (also proven in Section~\ref{sec:GMSSC}) establishes an approximation guarantee (arbitrarily) close to $28$
on the expected access cost. 

\begin{theorem}\label{t:GMMSC-rounding}
Let $\mathrm{P_A}$ denote the probability distribution over permutations that Algorithm~\ref{alg:sampling_GMSSC} produces given as input an $A \in \mathrm{DS}$. For any request $R$,
\[\E_{\pi \sim \mathrm{P_A}}[\acost(\pi,R)] \leq 28(1+\eps) \cdot \mathrm{FAC}_{R}(A)\]
\noindent where $\eps>0$ is the parameter used in Definition~\ref{d:artificial_cost}.
\end{theorem}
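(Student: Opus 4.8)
The plan is to bound both sides of the claimed inequality against a common, explicit ``fractional coverage'' quantity read off directly from $A$. For any permutation $\pi$ the access cost of \eqref{eq:acost} rewrites as a prefix-covering sum,
\[
\acost(\pi,R)=\sum_{i=0}^{n-1}\mathbf 1\!\left[\,\bigl|\{e\in R:\pi^{-1}(e)\le i\}\bigr|<\mathrm K(R)\,\right],
\]
so that, writing $N_{\le i}$ for the number of elements of $R$ among the first $i$ positions, linearity of expectation gives $\E_{\pi\sim\mathrm P_A}[\acost(\pi,R)]=\sum_{i\ge 0}\P_{\pi\sim\mathrm P_A}[N_{\le i}<\mathrm K(R)]$. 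Setting $\mu_i:=\sum_{e\in R}\sum_{j\le i}A_{ej}$ for the fractional mass of $R$ in the first $i$ positions, I would prove the theorem in two independent pieces: a lower bound $\mathrm{FAC}_R(A)\ge\frac{1}{1+\eps}\sum_{i\ge0}\bigl(1-\mu_i/\mathrm K(R)\bigr)_+$, and a rounding upper bound $\sum_{i\ge 0}\P[N_{\le i}<\mathrm K(R)]\le 28\sum_{i\ge0}\bigl(1-\mu_i/\mathrm K(R)\bigr)_+$; multiplying the two yields the statement.

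For the lower bound, I would use that $C_F$ is itself a prefix-covering sum, so for any feasible $y$ with configuration marginals $m_{ej}=\sum_{F:(e,j)\in F}y_F$ one has $\sum_F C_F y_F=\sum_{i\ge0}\P_{y}[N^F_{\le i}<\mathrm K(R)]$, where $N^F_{\le i}$ counts the elements of $R$ that $F$ places in positions $\le i$. Applying Markov's inequality to the nonnegative variable $N^F_{\le i}$, whose mean is $\sum_{e,j\le i}m_{ej}$, gives $\P_{y}[N^F_{\le i}<\mathrm K(R)]\ge\bigl(1-(\sum_{e,j\le i}m_{ej})/\mathrm K(R)\bigr)_+$. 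Since the penalty term in \eqref{eq:ALP} carries the huge coefficient $n^4/\eps$, the optimal solution $y^\ast$ satisfies $\sum_{e,j}|A_{ej}-m^\ast_{ej}|\le\frac{\eps}{n^4}\mathrm{FAC}_R(A)$, so the marginals essentially equal $A$ and the per-prefix means are $\mu_i$ up to a negligible correction; collecting these perturbations over the at most $n$ relevant positions produces exactly the multiplicative $(1+\eps)$ slack. (For $\mathrm K(R)=1$ this lower bound is precisely the objective already optimized in Algorithm~\ref{alg:drand}.)

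For the rounding upper bound I would invoke the structure of Algorithm~\ref{alg:sampling_GMSSC}: it places elements respecting the marginals $A_{ej}$ while keeping the prefix-occupancy indicators negatively associated, so that $N_{\le i}$ concentrates around its mean $\mu_i$. This yields a Chernoff-type tail $\P[N_{\le i}<\mathrm K(R)]\le\exp\!\bigl(-\Theta(\mathrm K(R))\,h(\mu_i/\mathrm K(R))\bigr)$, which is $O(1)$ while $\mu_i$ is below a constant multiple of $\mathrm K(R)$ and decays geometrically once $\mu_i$ exceeds it. Because $\mu_i$ is nondecreasing in $i$, I would group positions into geometrically growing windows and charge the ``tail'' windows (where $\mu_i\ge\mathrm K(R)$, and the lower bound contributes nothing) back to the ``head'' windows through the geometric decay; the constant $28$ then emerges from optimizing the scale factor of the scheme against the slack in the tail bound, uniformly over $\mathrm K(R)$ and $|R|$.

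The main obstacle is precisely this last charging argument. The lower bound $\sum_i(1-\mu_i/\mathrm K(R))_+$ is supported only on the early positions where $\mu_i<\mathrm K(R)$, yet the rounded permutation can still fail to cover $\mathrm K(R)$ elements for $i$ with $\mu_i\ge\mathrm K(R)$; all of that tail probability must be paid for out of the head via the concentration inequality plus the geometric summation, and making this close with the specific constant $28$ simultaneously for every demand value and request size is the delicate part. This is exactly where the sampling rate and level structure of Algorithm~\ref{alg:sampling_GMSSC} must be tuned, and also where one must verify that the scheme outputs a genuine permutation while keeping its occupancy marginals faithful to $A$; the $(1+\eps)$ factor, by contrast, is pure bookkeeping inherited from the penalty term of Definition~\ref{d:artificial_cost}.
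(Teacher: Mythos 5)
There is a genuine gap, and it lies in the choice of the common intermediate quantity. You propose to sandwich both sides against the ``Markov relaxation'' $\sum_{i}\bigl(1-\mu_i/\mathrm K(R)\bigr)_+$ with $\mu_i=\sum_{e\in R}\sum_{j\le i}A_{ej}$, which corresponds only to the $M=\emptyset$ constraint of the LP in Definition~\ref{d:cost_SW}. For general demands this quantity is too weak by a factor of up to $\mathrm K(R)$, so your rounding upper bound $\sum_i\P[N_{\le i}<\mathrm K(R)]\le 28\sum_i(1-\mu_i/\mathrm K(R))_+$ is false and no amount of tuning the sampling rate can repair it. Concretely, take $R=\{e_1,\dots,e_K\}$ with $\mathrm K(R)=K$, and let $A$ be the integral matrix placing $e_1,\dots,e_{K-1}$ at positions $1,\dots,K-1$ and $e_K$ at position $n$. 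Any rounding faithful to these marginals (including Algorithm~\ref{alg:sampling_GMSSC}) must wait until position $n$ to collect $e_K$, so the expected access cost is $\Theta(n)$; yet $\mu_i=K-1$ for $K-1\le i<n$, so $\sum_i(1-\mu_i/K)_+ = \Theta(K) + (n-K)/K = \Theta(n/K)$. The ratio is $\Theta(K)$, unbounded. This is precisely why the paper does not charge the rounding to the prefix-mass bound but to the knapsack-cover LP $\mathrm{SW}_R(A)$ of Definition~\ref{d:cost_SW}, whose constraints for \emph{every} subset $M\subseteq R$ (here, $M=\{e_1,\dots,e_{K-1}\}$ forces $z_i=0$ up to position $n$) restore the correct value $\Theta(n)$. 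The $28$ then comes as a black box from Lemma~\ref{l:skutella} (the Skutella--Williamson analysis of exactly this rounding against $\mathrm{SW}_R$), not from a Chernoff-plus-geometric-windows argument; a plain lower-tail concentration bound for $N_{\le i}$ also degrades badly when $\mathrm K(R)$ is small, which is another reason the charging you sketch does not close.

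Your lower-bound half is sound but lands on the wrong target: Markov's inequality applied to $N^F_{\le i}$ recovers only the $M=\emptyset$ inequality, discarding exactly the strength needed above. The correct analogue is the paper's Lemma~\ref{l:comparison}, which shows $\mathrm{SW}_R(A)\le(1+\eps)\,\mathrm{FAC}_R(A)$ by building a feasible $z^\ast$ for the full knapsack-cover LP from the configuration variables $y^\ast_F$ (setting $z_i^\ast=(\sum_{F:C_F\le i-1}y^\ast_F-\eps/n)_+$ and verifying every subset constraint $M$ using the combinatorial fact that a configuration with $C_F<i$ places at least $\mathrm K(R)-|M|$ elements of $R\setminus M$ before position $i$). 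Your observation that the $n^4/\eps$ penalty forces the configuration marginals to agree with $A$ up to $\eps/n^3$ is correct and is exactly Claim~\ref{c:1}; that part of your bookkeeping survives, but it must feed into the subset constraints of $\mathrm{SW}_R$, not into the single averaged constraint you use. For $\mathrm K(R)=1$ the two quantities coincide (Corollary~\ref{cor:2}) and your plan is essentially the paper's \ms{} analysis; the generalization to arbitrary $\mathrm K(R)$ is where the missing idea (knapsack-cover strengthening) is indispensable.
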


Using Theorem~\ref{t:GMMSC-rounding} instead of Theorem~\ref{t:det-rounding} in the previously exhibited analysis, implies that combining Algorithms~\ref{alg:projected_gradient_descent} and ~\ref{alg:sampling_GMSSC}
leads to a \textit{polynomial-time randomized online learning algorithm} for \gms{} with $28(1+\eps)$ regret.

In Section~\ref{sec:MSSC_random} we improve Theorem~\ref{t:GMMSC-rounding} for the 
the special case of \ms{}. The randomized rounding scheme described in Algorithm~\ref{alg:sampling_MSSC} admits the approximation guarantee of Theorem~\ref{t:MMSC-rounding}, which implies a \textit{polynomial-time randomized online learning algorithm} for \ms{} with $11.713(1+\eps)$ regret

\begin{theorem}\label{t:MMSC-rounding}
Let $\mathrm{P_A}$ denote the probability distribution over permutations that Algorithm~\ref{alg:sampling_MSSC} produces given as input an $A \in \mathrm{DS}$. For any request $R$ with covering requirement $\mathrm{K}(R)=1$,
\[\E_{\pi \sim \mathrm{P}_A}[\acost(\pi,R)] \leq 11.713(1+\eps) \cdot \mathrm{FAC}_{R}(A)\]
\noindent where $\eps>0$ is the parameter used in Definition~\ref{d:artificial_cost}.
\end{theorem}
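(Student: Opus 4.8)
The plan is to bound the expected access cost of the random permutation $\pi\sim\mathrm{P}_A$ by writing it as a tail sum over positions and controlling, position by position, the probability that the request $R$ is still uncovered. Since $\mathrm{K}(R)=1$, a request is ``covered by position $i$'' precisely when at least one element of $R$ has been placed among the first $i$ slots, so by \eqref{eq:acost} we have $\E_{\pi\sim\mathrm{P}_A}[\acost(\pi,R)]=\sum_{i=1}^n\Pr[\text{no element of }R\text{ occupies a slot}\le i-1]$. First I would rewrite the fractional access cost $\mathrm{FAC}_R(A)$ for the case $\mathrm{K}(R)=1$ in the analogous tail form, so that the target inequality reduces to a position-wise comparison between an integral coverage probability and the fractional coverage mass $x_R(i)\coloneqq\sum_{j\le i}\sum_{e\in R}A_{ej}$ that the matrix $A$ assigns to $R$ within the first $i$ positions (up to the penalty terms of Definition~\ref{d:artificial_cost}).

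The core of the argument is a single lemma describing the probabilistic behaviour of Algorithm~\ref{alg:sampling_MSSC}: I would show that the probability that none of the elements of $R$ lands in the first $i$ positions decays like an explicit function of $x_R(i)$. The scheme assigns elements to a geometric hierarchy of position blocks driven by a single shared source of randomness, and the crucial point the $\mathrm{K}(R)=1$ case buys us is that we only need one surviving element rather than $\mathrm{K}(R)$ of them; this lets us replace the weaker union-type estimate used for \gms{} in Theorem~\ref{t:GMMSC-rounding} by a sharper, multiplicatively-composing bound on the probability that $R$ remains uncovered through the first several blocks, exploiting that the column sums of $A$ equal $1$ so that the consumed masses across blocks combine in the right way. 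I would establish this lemma by conditioning on the shared randomness and tracking, block by block, how much fractional mass of $R$ has been consumed.

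With the coverage lemma in hand, I would sum the per-position bounds over $i$, pass to the continuous analogue (an integral against the distribution of the first-coverage position), and compare term-by-term against the tail form of $\mathrm{FAC}_R(A)$. The worst-case ratio then reduces to a one-dimensional optimization over the free parameter of the scheme, namely the base $c>1$ of the geometric block sizes (and possibly a threshold in the offset distribution); choosing $c$ to minimize this ratio is exactly where the constant $11.713$ emerges, as the optimum of a transcendental expression that I would evaluate numerically. The $(1+\eps)$ factor is inherited directly from the penalty terms $\tfrac{n^4}{\eps}\sum_{e,j}|A_{ej}-\sum_{F:(e,j)\in F}y_F|$ in \eqref{eq:ALP}, which force the configuration variables $y_F$ of the optimal \eqref{eq:ALP} solution to track the matrix entries $A_{ej}$ up to a $(1+\eps)$ slack; I would invoke this exactly as in the proof of Theorem~\ref{t:GMMSC-rounding}.

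The main obstacle I anticipate is the coverage lemma itself: producing a bona fide permutation (each slot filled exactly once) while still obtaining a clean, multiplicatively-composing tail bound on the uncovered probability. Fully independent placement of elements would give the cleanest analysis but violates the permutation constraint, so the scheme must couple the placements, and the hard technical step is to show that this coupling only helps—e.g.\ via a negative-correlation or FKG-type argument, or an explicit conditioning argument across blocks—so that the single-element survival probability is still governed by $x_R(i)$. Making this bound tight rather than merely constant is what separates the $11.713$ guarantee from a weaker one, and is where I expect the real work to lie.
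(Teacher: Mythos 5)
Your outline follows the same general route as the paper (which adapts the Skutella--Williamson rounding analysis to the cost $\mathrm{FAC}_R$), and you correctly identify the tail-form rewriting of the fractional cost for $\mathrm{K}(R)=1$, the numerical optimization producing $11.713$, and the source of the $(1+\eps)$ factor (Corollary~\ref{cor:2} plus Lemma~\ref{l:comparison}, exactly as in Theorem~\ref{t:GMMSC-rounding}). However, the proposal leaves the central lemma unproven and, more importantly, mislocates the difficulty. Algorithm~\ref{alg:sampling_MSSC} does not place elements into slots subject to a permutation constraint; it assigns each element $e$ an \emph{index} $i_e$ independently (given the shared scale $\alpha$ and the doubled matrix $B$) and then sorts by $i_e$. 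Consequently no FKG or negative-correlation argument is needed: the $\alpha_e$ are independent, and the product bound
$\Pr[\min_{e\in R} i_e \geq 2^k i_R^\alpha+1]=\prod_{e\in R}\bigl(1-\sum_{j\leq 2^k i_R^\alpha}B_{ej}\bigr)_+\leq e^{-(k+1)z}$
is immediate once one uses Condition~1 of Lemma~\ref{l:constructed_matrix} (the doubling step accumulates $(k+1)$ copies of the mass of $A$ at scale $2^k i_R^\alpha$). This $(k+1)$-fold accumulation is precisely what makes $\sum_k 2^k e^{-kz}$ converge and produces the $1/(1-2e^{-z})$ factor; your sketch never identifies this mechanism.

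The genuine gap is the translation from index space back to positions in the output permutation, which your position-by-position tail sum glosses over: the event ``no element of $R$ occupies a slot $\leq i-1$'' depends jointly on the indices of the $R$-elements \emph{and} of all other elements, so it cannot be controlled by $x_R(i)$ alone. The paper handles this in expectation (Lemma~\ref{l:access-cost}): $\acost(\pi,R)\leq 1+|\{e'\notin R: i_{e'}\leq \min_{e\in R}i_e\}|$, and Condition~2 of Lemma~\ref{l:constructed_matrix} ($\sum_{j\leq i}\sum_e B_{ej}\leq 2Qi$) bounds the expected count by $2Q\cdot\E[\min_{e\in R}i_e]$. Without this step your per-position comparison does not close. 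Finally, a minor point: the free parameter optimized to obtain $11.713$ is not the base of the geometric blocks (which is fixed at $2$) but the scaling constant $z=\alpha Q=1.6783$ in the expression $\tfrac{4z}{1-2e^{-z}}+1$.
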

\vspace{-2mm}



\section{Experimental Evaluations}\label{s:exp}
In this section we provide experimental evaluations of all the proposed online learning algorithms (both deterministic and randomized) for  \textit{Min-Sum Set Cover}. Surprisingly enough our simulations seem to suggest that the \textit{deterministic rounding scheme} proposed in Algorithm~\ref{alg:drand}, performs significantly better than its theoretical guarantee, stated in Theorem~\ref{t:det-rounding}, that associates its regret with the cardinality of the sets.
The following figures illustrate the performance of  Algorithm~\ref{alg:drand} and Algorithm~\ref{alg:sampling_MSSC}, and compare it with the performance of the offline algorithm proposed by Feige et al. \cite{FLP04} and the performance of selecting a permutation uniformly at random at each round. In the left figure each request contains either element $1$ or $2$ and four additional randomly selected elements, while in the right figure each request contains one of the elements $\{1,2,3,4,5\}$ and nine more randomly selected elements.\footnote{In the subsequent figures the curves describing the performance of each algorithm are placed in the following top-down order i)\textit{ Selecting a permutation uniformly at random}, ii) \textit{Algorithm~\ref{alg:drand}}, iii)\textit{ Algorithm~\ref{alg:sampling_MSSC}} and iv)\text{ Feige-Lovasz-Tetali algorithm}~\cite{FLP04}.} We remark that in our experimental evaluations, we solve the optimization problem of Step~$3$ in Algorithm~\ref{alg:drand} through a simple heuristic that we present in Appendix~\ref{s:heuristic}, while for the computation of the subgradients we use the formula presented in Corollary~\ref{c:gradient}. The code used for the presented simulations can be found at \url{https://github.com/sskoul/ID2216}.

\begin{figure}[ht]
  \centering
  \includegraphics[valign = c, width = 6.5cm]{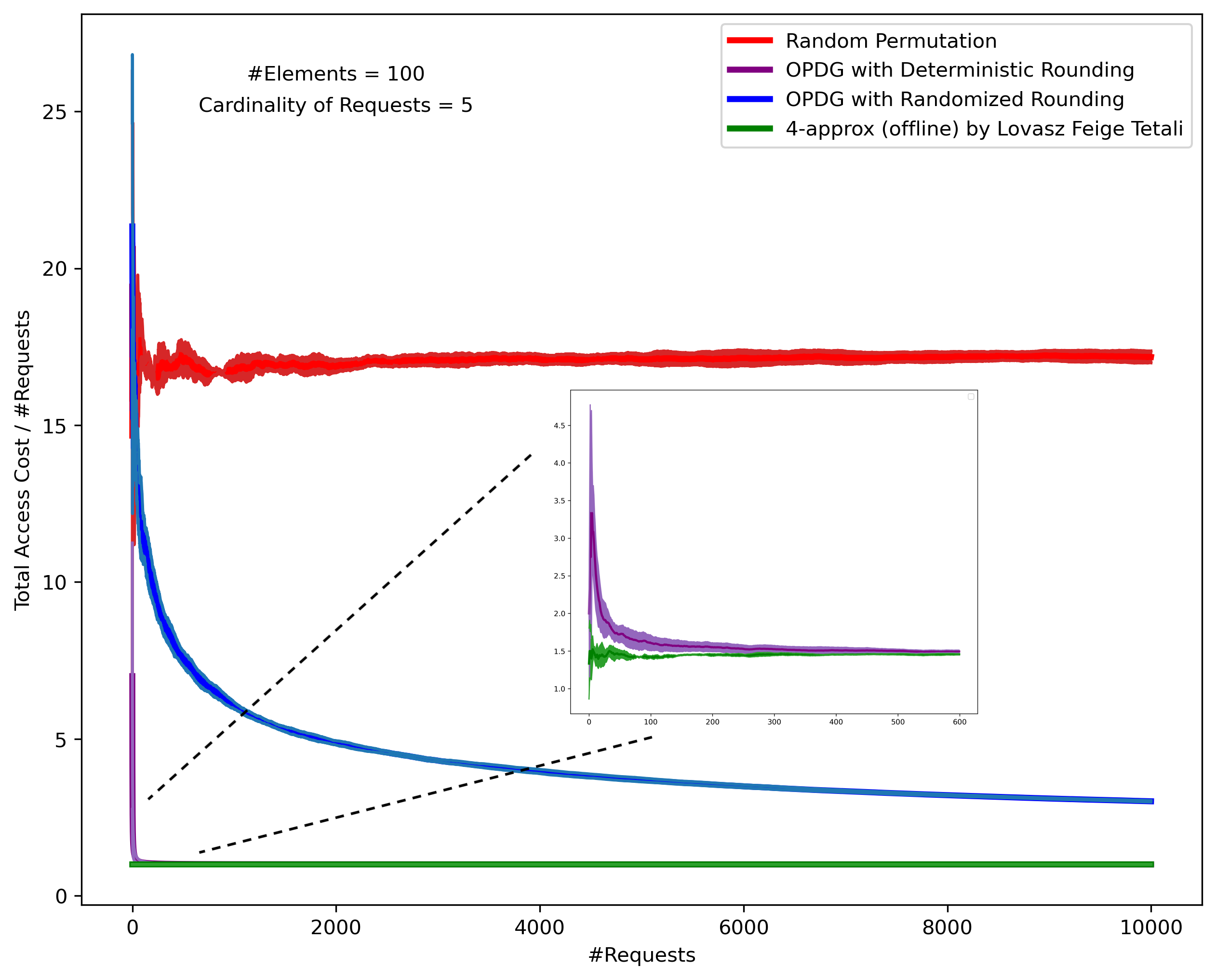}
  \qquad
  \includegraphics[valign = c, width = 6.5cm]{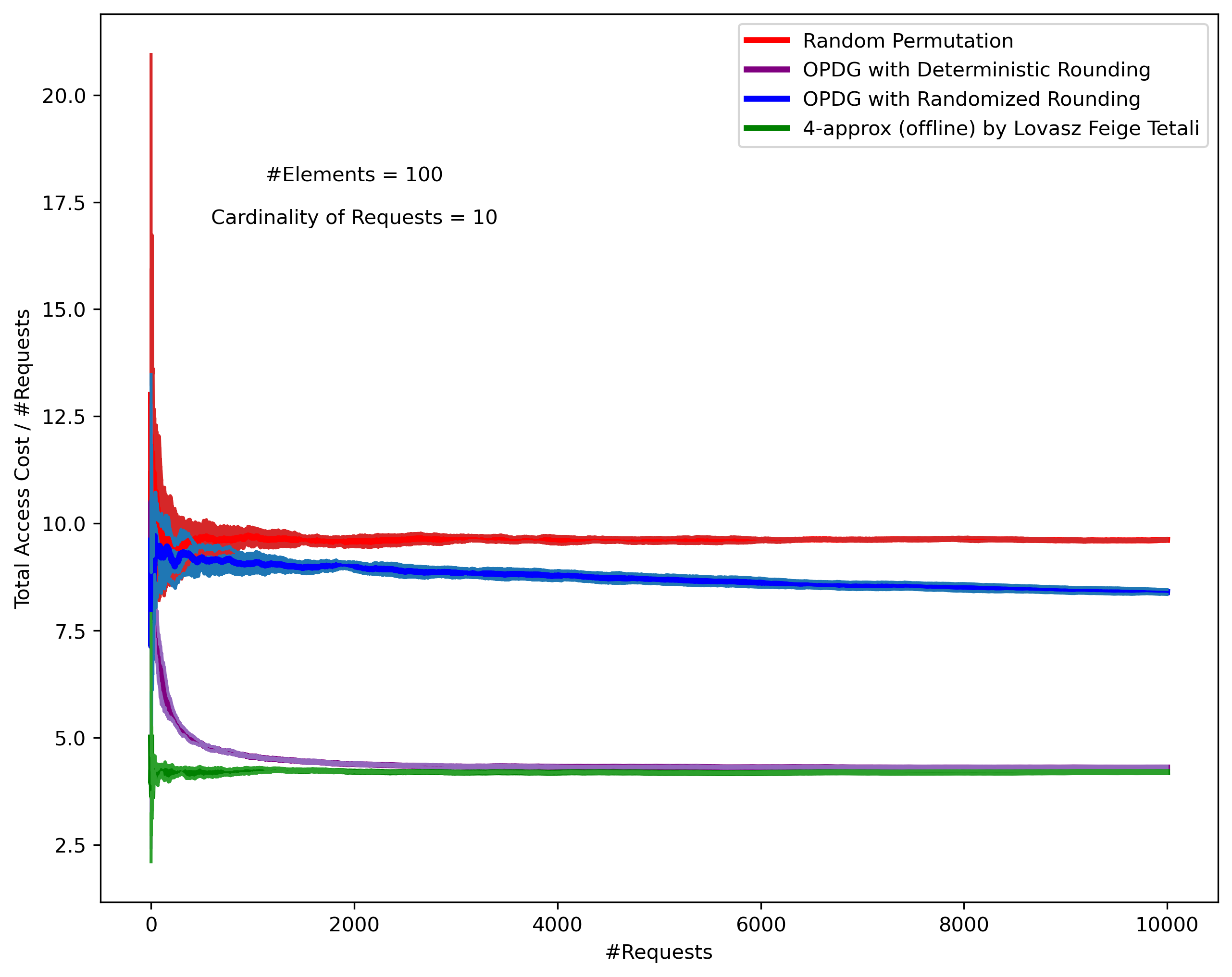}
  \qquad
\end{figure}

\section{Conclusion}
This work examines polynomial-time online learning algorithms for  \textit{(Generalized) Min-Sum Set Cover}. Our results are based on solving a relaxed online learning problem of smaller dimension via \textit{Online Projected Gradient Descent}, the solution of which is transformed at each round into a solution of the initial action space with bounded increase in the cost. To do so, the cost function of the relaxed online learning problem is defined by the value
of a linear program with exponentially many constraints. Despite its exponential size, we show that the subgradients can be efficiently computed via associating them with the variables of the LP' s \textit{dual}. We believe that the bridge between online learning algorithms (e.g. online projected gradient descent) and traditional algorithmic tools (e.g. duality, separation oracles, deterministic/randomized rounding schemes), introduced in this work, is a promising new framework for the design of efficient online learning algorithms in high dimensional combinatorial domains.
Finally closing the gap between our regret bounds and the lower bound of  $4$, which holds for 
polynomial-time online learning algorithms for \ms{}, is an interesting open problem.

\section*{Broader Impact}
We are living in a world of abundance, where each individual is provided myriad of options in terms of available products
and services (e.g. music selection, movies etc.). Unfortunately this overabundance makes the cost of exploring all of them prohibitively large. This problem is only compounded by the fast turn around of new trends at a seemingly ever increasing rate.
Our algorithmic techniques provide a practically applicable methodology for managing this complexity.

\section*{Funding Disclosure}
Dimitris Fotakis and Thanasis Lianeas are supported by the Hellenic Foundation for Research and Innovation (H.F.R.I.) under the ``First Call
for H.F.R.I. Research Projects to support Faculty members and Researchers' and the procurement of high-cost research equipment grant'', project BALSAM, HFRI-FM17-1424. Stratis Skoulakis was
supported by NRF 2018 Fellowship NRF-NRFF2018-07.
 G.~Piliouras gratefully acknowledges AcRF Tier-2 grant (Ministry of Education – Singapore) 2016-T2-1-170, grant PIE-SGP-AI-2018-01, NRF2019-NRF-ANR095 ALIAS grant and NRF 2018 Fellowship NRF-NRFF2018-07 (National Research Foundation Singapore).
 
\bibliographystyle{plain}
\bibliography{refs}


\newpage

\begin{appendices}
\section{Omitted Proofs of Section~\ref{sec:intermediate}}
\label{appendix_online_gradient_descent}
\begin{proof}[Proof of Lemma~\ref{l:subgradients_dual}]
To simplify notation, let $\lambda^\ast(A),\lambda^\ast_{ej}(A)$ denote the values of the variables $\lambda^\ast(A),\lambda^\ast(A)_{ej}$ in the optimal solution of the dual program written with respect to doubly stochastic matrix $A \in \mathrm{DS}$. Respectively $\lambda^\ast(B),\lambda_{ej}^\ast(B)$ for the doubly stochastic matrix $B \in \mathrm{DS}$. By strong duality, we have that
\[\mathrm{FAC}_R(A) = \lambda^\ast(A) + \sum_{e\in R}\sum_{j=1}^n A_{ej} 
\cdot \lambda^\ast_{ej}(A) \text{ and }
\mathrm{FAC}_R(B) = \lambda^\ast(B) + \sum_{e\in R}\sum_{j=1}^n B_{ej} 
\cdot \lambda^\ast_{ej}(B)
\]
\noindent Since  matrices $A$ and $B$ only affect the objective function of the dual and not its constraints, the solution $\lambda^\ast(A),\lambda_{ej}^\ast(A)$ is a feasible solution for the dual program written according to matrix $B$. By the optimality of $\lambda^\ast(B),\lambda^\ast_{ej}(B)$ we get,
\[\mathrm{FAC}_R(B) = \lambda^\ast(B) + \sum_{e\in R}\sum_{j=1}^n B_{ej} 
\cdot \lambda^\ast_{ej}(B) \geq \lambda^\ast(A) + \sum_{e\in R}\sum_{j=1}^n B_{ej} 
\cdot \lambda^\ast_{ej}(A)\]
\noindent As a result, we get that
$\mathrm{FAC}_R(B) - \mathrm{FAC}_R(A)\geq \sum_{e\in R}\sum_{j=1}^n \lambda^\ast_{ej}(A) \cdot(B_{ej} - A_{ej}) 
$ implying that the vector $g$ containing the $\lambda^\ast_{ej}(A)$'s, is a subgradient of $\mathrm{FAC}_R(\cdot)$ at point $A$, i.e., $g \in \partial \mathrm{FAC}_{R}(A)$. The inequality $\norm{g}_2 \leq n^5/\eps$ directly follows by the fact that $|\lambda^\ast(A)_{ej}| \leq n^4/\eps$.
\end{proof}

\noindent
\textit{Separation Oracle for the LP in Equation~\ref{eq:dual}:
} 
The dual linear program of \eqref{eq:dual} is differs from the $\text{LP}_{\text{dual}}$ in \cite[Sec.~2.2]{ImSZ14} only in the constraints $|\lambda_{ej}|\leq n^4 /\eps$, which are only present in \eqref{eq:dual}. \cite[Sec.~2.2]{ImSZ14} present a separation oracle for their $\text{LP}_{\text{dual}}$ (i.e., for \eqref{eq:dual}, without the constraints $|\lambda_{ej}|\leq n^4/\eps$), which is based on formulating and solving a min-cost flow problem. Since, in case of \eqref{eq:dual}, the we have only $n^2$ additional constraints $|\lambda_{ej}|\leq n^4/\eps$, we can first check whether these constraints are satisfied by the current solution and then run the separation oracle of \cite{ImSZ14}.

\section{Omitted Proofs of Section~\ref{sec:roundings}}

\subsection{Proof of Theorem~\ref{t:GMMSC-rounding}}
\label{sec:GMSSC}
\noindent In Algorithm~\ref{alg:sampling_GMSSC}, we present the \textit{online randomized rounding scheme} that combined with \textit{Projected Gradient Descent} (Algorithm~\ref{alg:projected_gradient_descent}) produces a \textit{polynomial-time randomized online learning algorithm}
for \gms{} with (roughly) $28$ regret. The randomized rounding scheme described in Algorithm~\ref{alg:sampling_GMSSC} was introduced by \cite{SW11} to provide a $28$-approximation algorithm for the (offline) \gms{}. \cite{SW11} proved that this randomized rounding scheme produces a random permutation with access cost  
at most $28$ times greater than the optimal fractional value of the LP relaxation of \gms{} introduced in \cite{BGK10}. We remark that this LP relaxation cannot be \textit{translated} to an equivalent \textit{relaxed online learning problem} as the one we formulated using the \textit{fractional access cost} of Definition~\ref{d:artificial_cost}. The goal of the section is to prove Theorem~\ref{t:GMMSC-rounding} which extends the result of \cite{SW11} to the \textit{fractional access cost} of Definition~\ref{d:artificial_cost}.

\begin{algorithm}[H]
  \caption{Converting Doubly Stochastic Matrices to Probability Distributions over Permutations}\label{alg:sampling_GMSSC}
  \textbf{Input:} A doubly stochastic matrix $A \in \mathrm{DS}$.\\
  \textbf{Output:} A probability distribution over permutations, $\mathrm{P}_A \sim \pi \in [n!]$ 
  \begin{algorithmic}[1]
  \State Randomly pick $\alpha \in (0,1)$ with probability density function $f(\alpha) = 2\alpha$.
  \State Set  $B \leftarrow (5.03/\alpha) \cdot A$ 
  \For{ all elements $e = 1$ to $n$ }
  \For{ all positions $j = 1$ to $\left \lfloor{n/2}\right \rfloor$ }
    \State $B_{e,2j} \leftarrow B_{e,2j} + B_{e,j}$.
  \EndFor
  \EndFor
  \For{ all elements $e = 1$ to $n$ }
    \State  Pick $\alpha_e$ uniformly at random in $[0,1]$.
    \State  Find the effecive index $i_e^\alpha \leftarrow \argmax_{i} \{i: \sum_{j=1}^{i-1}B_{ej} < \alpha_e\}$.
  \EndFor
  \State Output the elements according to the order of $i_e$'s.
    \end{algorithmic}
\end{algorithm}

\begin{definition}\label{d:cost_SW}
For a request $R$ with covering requirements $\mathrm{K}(R)$, we define the cost $\mathrm{SW}_R:\mathrm{DS} \mapsto \Reals$ on the doubly stochastic matrices as follows: For any doubly stochastic matrix $A \in \mathrm{DS}$, the value $\mathrm{SW}_R(A)$ equals the value of the following linear program,
\begin{equation*}
\begin{array}{ll@{}ll}
\text{minimize}  & \displaystyle\sum _{i=1}^n \left(1-z_i \right)&\\
\text{subject to}& \displaystyle \left(\mathrm{K}(R) - |M| \right)\cdot z_i \leq \sum\limits_{j=1}^{i-1}\sum\limits_{e \in R\setminus M}A_{ej}~\text{  for all }~ M \subseteq R\\
&z_i \in [0,1]~\text{ for all}~ 1\leq i\leq n
\end{array}
\end{equation*}
\end{definition}

\begin{lemma}\cite{SW11}\label{l:skutella}
For any doubly stochastic matrix $A \in \mathrm{DS}$, 
\[\E_{\pi \sim \mathrm{P}_A}[\acost(\pi,R)] \leq 28 \cdot \mathrm{SW}_{R}(A)\]
where $\mathrm{P}_A$ is the probability distribution over the permutation produced by Algorithm~\ref{alg:sampling_GMSSC} when the matrix $A$ was given as input.
\end{lemma}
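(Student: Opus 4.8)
The plan is to treat Lemma~\ref{l:skutella} as the per-request restatement of the main rounding guarantee of \cite{SW11} and to reduce to their analysis after pinning down two correspondences: that Algorithm~\ref{alg:sampling_GMSSC} is exactly the rounding scheme of \cite{SW11}, and that $\mathrm{SW}_R(A)$ in Definition~\ref{d:cost_SW} is precisely the per-request knapsack-cover relaxation against which they prove the factor $28$. The first correspondence is immediate from the description of Algorithm~\ref{alg:sampling_GMSSC} (sampling $\alpha$ with density $2\alpha$, scaling the matrix by $5.03/\alpha$, the geometric doubling step $B_{e,2j} \leftarrow B_{e,2j} + B_{e,j}$, and placing each element at its random threshold). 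For the second, I would read the column-prefix sums $\sum_{j=1}^{i-1} A_{ej}$ as the fractional amount of element $e$ scheduled before position $i$; then the constraint $(\mathrm{K}(R)-|M|)\,z_i \le \sum_{j=1}^{i-1}\sum_{e \in R\setminus M} A_{ej}$ is exactly the standard knapsack-cover inequality, stating that to declare level $i$ covered ($z_i=1$) the elements outside any $M \subseteq R$ must already contribute the residual demand $\mathrm{K}(R)-|M|$. Hence $\mathrm{SW}_R(A)$ is the contribution of request $R$ to the LP relaxation of \cite{BGK10} evaluated at the fractional assignment induced by $A$.

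With these correspondences in hand, the substance lies in the probabilistic accounting of \cite{SW11}, which I would carry out per request. First I would fix $R$ and, for the random permutation $\pi \sim \mathrm{P}_A$, rewrite $\E[\acost(\pi,R)]$ as $\sum_i \Pr[\text{fewer than } \mathrm{K}(R) \text{ elements of } R \text{ appear in the first } i-1 \text{ positions}]$ using representation~\eqref{eq:acost}. Conditioning on the sampled $\alpha$, the scaling by $5.03/\alpha$ together with the geometric doubling turns the prefix masses $\sum_{j<i} A_{ej}$ into the probabilities that each element lands before a given level, so the non-coverage probability at level $i$ can be controlled by the knapsack-cover constraints that $\mathrm{SW}_R(A)$ certifies: whenever coverage $z_i$ is forced to be small, many elements must already be placed, and a union/Chernoff-type bound limits the chance that coverage has not yet occurred. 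Integrating these level-wise bounds against the density $f(\alpha)=2\alpha$ and summing over $i$ yields a telescoping comparison with $\sum_i (1-z_i)=\mathrm{SW}_R(A)$, and the specific constants ($5.03$ and the density $2\alpha$) are tuned so that the overall multiplier is at most $28$.

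The hard part will be reproducing this probabilistic analysis with the correct constant, rather than establishing the correspondences, which are essentially bookkeeping. Two points deserve care. First, $\mathrm{SW}_R(A)$ is a per-request quantity, whereas \cite{SW11} phrase their guarantee for the global objective, so I must confirm that their argument is genuinely request-by-request---i.e.\ that a single random permutation simultaneously satisfies $\E[\acost(\pi,R)] \le 28\,\mathrm{SW}_R(A)$ for every $R$, which should hold because the rounding is oblivious to $R$ and each request's expected cost is compared only against its own knapsack-cover relaxation. Second, I should verify that an optimal $\{z_i\}$ of Definition~\ref{d:cost_SW} supplies exactly the fractional coverage profile driving the level-wise probability bounds; since the LP minimizes $\sum_i(1-z_i)$ subject to the knapsack-cover inequalities, its optimum pushes each $z_i$ as large as the constraints allow, which is precisely what the rounding analysis consumes. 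Once these are checked, the lemma follows from the tuned computation of \cite{SW11}.
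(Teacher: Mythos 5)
Your proposal matches the paper's treatment: the paper does not reprove this bound but imports it wholesale from \cite{SW11}, noting only that Algorithm~\ref{alg:sampling_GMSSC} is exactly the Skutella--Williamson rounding scheme and that $\mathrm{SW}_R(A)$ in Definition~\ref{d:cost_SW} is the per-request knapsack-cover relaxation their analysis is measured against --- precisely the two correspondences you pin down before deferring the constant-$28$ computation to their paper. Your additional observation that the guarantee is genuinely per-request (because the rounding is oblivious to $R$) is correct and is exactly what the paper implicitly relies on when it later combines this lemma with Lemma~\ref{l:comparison}.
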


In Lemma~\ref{l:comparison} we  associate the cost $\mathrm{SW}_R(\cdot)$ of Definition~\ref{d:cost_SW} with the \textit{fractional access cost}
$\mathrm{FAC}_{R}(\cdot)$ of Definition~\ref{d:artificial_cost}. Then Theorem~\ref{t:GMMSC-rounding} directly follows by Lemma~\ref{l:skutella} and Lemma~\ref{l:comparison}.

\begin{lemma}\label{l:comparison}
For any doubly stochastic matrix $A \in \mathrm{DS}$, \[\mathrm{SW}_R(A) \leq  (1+\epsilon)\cdot \mathrm{FAC}_R(A)\]
where $\eps>0$ is the parameter of the linear program (\ref{eq:ALP}) in Definition~\ref{d:artificial_cost}.
\end{lemma}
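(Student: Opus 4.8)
The plan is to construct, from an optimal solution of the configuration LP \eqref{eq:ALP} defining $\mathrm{FAC}_R(A)$, a feasible solution of the LP in Definition~\ref{d:cost_SW} whose objective is at most $(1+\epsilon)\,\mathrm{FAC}_R(A)$. Fix an optimal solution $\{y_F^\ast\}_{F\in\mathrm{F}(R)}$ of \eqref{eq:ALP} and let $A'$ be the fractional assignment it induces on the rows of $R$, namely $A'_{ej}=\sum_{F:(e,j)\in F} y_F^\ast$. Writing $\delta=\sum_{e\in R}\sum_{j=1}^n |A_{ej}-A'_{ej}|$, the optimal value splits as $\mathrm{FAC}_R(A)=\sum_{F}C_F\,y_F^\ast+\tfrac{n^4}{\epsilon}\,\delta$. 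Note that $\mathrm{SW}_R$ depends only on the entries $A_{ej}$ with $e\in R$, so it can be evaluated at $A'$ even though $A'$ need not be doubly stochastic.

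First I would show that $\mathrm{SW}_R(A')\le\sum_F C_F\,y_F^\ast$, using two facts. (i) For an integral configuration matrix $A^F$ (with $A^F_{ej}=1$ iff $(e,j)\in F$), the LP of Definition~\ref{d:cost_SW} recovers the configuration cost, $\mathrm{SW}_R(A^F)=C_F$: taking $M$ to be the set of elements of $R$ lying before position $i$ forces $z_i=0$ at every uncovered position, while the constraint with $M=\emptyset$ permits $z_i=1$ once $\mathrm{K}(R)$ elements have appeared, so $\sum_i(1-z_i)$ equals the access cost $C_F$. (ii) $\mathrm{SW}_R(\cdot)$ is convex in $A$: its value equals $n$ minus the optimum of a maximization LP whose only dependence on $A$ is through the affine right-hand sides of its constraints, and the optimum of such an LP is concave in its right-hand side by the standard averaging-of-optimal-solutions argument. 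Since $A'=\sum_F y_F^\ast A^F$ is a convex combination of configuration matrices, Jensen's inequality yields $\mathrm{SW}_R(A')\le\sum_F y_F^\ast\,\mathrm{SW}_R(A^F)=\sum_F C_F\,y_F^\ast$.

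Next I would bound the change in $\mathrm{SW}_R$ when passing from $A'$ to the true matrix $A$ by a Lipschitz-type argument. Let $z^\ast$ be optimal for $\mathrm{SW}_R(A')$ and set $z_i=\max\{0,\,z_i^\ast-\delta\}$. For any $M$ with $|M|<\mathrm{K}(R)$, the right-hand side at $A$ differs from that at $A'$ by at most $\delta$, and since $\mathrm{K}(R)-|M|\ge 1$ the slack produced by lowering $z_i^\ast$ by $\delta$ compensates for this, so $z$ is feasible for $\mathrm{SW}_R(A)$; constraints with $|M|\ge\mathrm{K}(R)$ hold trivially. As $z$ differs from $z^\ast$ by at most $\delta$ in each of $n$ coordinates, $\mathrm{SW}_R(A)\le\sum_i(1-z_i)\le\mathrm{SW}_R(A')+n\delta$. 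Combining with the previous paragraph gives $\mathrm{SW}_R(A)\le\sum_F C_F\,y_F^\ast+n\delta$, and since $n\le n^4/\epsilon$ (for $\epsilon\le n^3$) this is at most $(1+\epsilon)\bigl(\sum_F C_F\,y_F^\ast+\tfrac{n^4}{\epsilon}\delta\bigr)=(1+\epsilon)\,\mathrm{FAC}_R(A)$.

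The step I expect to be the main obstacle is the repair argument of the third paragraph: one must verify that the coarse perturbation $z_i\mapsto\max\{0,z_i^\ast-\delta\}$ simultaneously restores feasibility against all exponentially many knapsack-cover constraints $M\subseteq R$, which is precisely where the denominators $\mathrm{K}(R)-|M|\ge 1$ and the large penalty weight $n^4/\epsilon$ in \eqref{eq:ALP} are used; the penalty is what guarantees that the polynomially bounded Lipschitz loss $n\delta$ is dominated by $\tfrac{n^4}{\epsilon}\delta$. The convexity claim (ii) and the integral identity (i) are the remaining technical points, but both reduce to standard LP facts once the right-hand-side and averaging structure is made explicit.
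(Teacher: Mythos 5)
Your proposal is correct, and it reaches the conclusion by a genuinely different route than the paper. The paper's proof writes down an explicit candidate $z_i^\ast=\bigl(\sum_{F:C_F\le i-1}y_F^\ast-\eps/n\bigr)_+$ and verifies feasibility directly; to do so it first needs a pointwise bound $|A_{ej}-\sum_{F:(e,j)\in F}y_F^\ast|\le\eps/n^3$, which it obtains by invoking the Birkhoff--von Neumann decomposition of $A$ to exhibit a zero-penalty feasible point of \eqref{eq:ALP} and comparing objective values; the resulting additive loss of $\eps$ is then converted to the multiplicative $(1+\eps)$ using $\mathrm{FAC}_R(A)\ge C_F\ge 1$. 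You instead split the argument into (i) $\mathrm{SW}_R(A')\le\sum_F C_F y_F^\ast$ via the integral identity $\mathrm{SW}_R(A^F)=C_F$ and Jensen applied to the convexity of $\mathrm{SW}_R$ in the right-hand sides, and (ii) a Lipschitz repair $z_i\mapsto(z_i^\ast-\delta)_+$ whose cost $n\delta$ is charged directly against the penalty term $\tfrac{n^4}{\eps}\delta$ already present in $\mathrm{FAC}_R(A)$. Both routes hinge on the same structural facts (the constraint $\mathrm{K}(R)-|M|\ge 1$ and the large penalty weight), but yours avoids Birkhoff--von Neumann entirely, needs no pointwise discrepancy bound, and in fact delivers the slightly stronger conclusion $\mathrm{SW}_R(A)\le\mathrm{FAC}_R(A)$ (the $(1+\eps)$ factor is not needed, since $n\delta\le\tfrac{n^4}{\eps}\delta$ already). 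The paper's version buys a concrete, hands-on feasible solution at the cost of an extra decomposition argument; yours buys modularity and a cleaner constant at the cost of invoking the (standard, and correctly justified) concavity of an LP optimum in its right-hand side.
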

\begin{proof}
Starting from the optimal solution $y_F$ of the linear program (\ref{eq:ALP}) of $\mathrm{FAC}_R(A)$ in Definition~\ref{d:artificial_cost},
we construct a feasible solution for the linear program of $\mathrm{SW}_R(A)$ of Definition~\ref{d:cost_SW} with cost approximately bounded by $(1+\eps)\cdot \mathrm{FAC}_R(A)$.
We first prove Claim~\ref{c:1} that is crucial for the subsequent analysis.

\begin{claim}\label{c:1}
For any element $e\in R$ and position $1\leq j \leq n$,
$|A_{ej} -\sum\limits_{F:(e,je,e,j)\in F}y_F^\ast| \leq \eps/n^3$.
\end{claim}
\begin{proof}
Since $A$ is a doubly stochastic matrix,  by the Birkhoff-von Neumann theorem there exists a vector $\hat{y}$ with $\hat{y}_F \geq 0$ and $\sum_{F \in \mathrm{F}(R)}\hat{y}_F = 1$
such that \[|A_{ej} -\sum\limits_{F:(e,j)\in F}\hat{y}_F|=0 \text{ for all }e \in R\text{ and } 1\leq j \leq n \]

Since $y^\ast$ is the optimal solution, we have that
\[\sum_{F \in \mathrm{F}(R)}C_F \cdot y_F^\ast + \frac{n^4}{\epsilon}\cdot\sum_{e \in R}\sum_{j=1}^n|A_{ej} -\sum\limits_{F:(e,j)\in F}y_F^\ast|
\leq \sum_{F \in \mathrm{F}(R)}C_F \cdot \hat{y}_F.
\]
\noindent Now the claim follows by the fact that $1 \leq C_F \leq n$, $\hat{y}_F \geq 0$ and $\sum_{F \in \mathrm{F}(R)}\hat{y}_F = 1$.
\end{proof}
\noindent Having established Claim~\ref{c:1}, we construct the
solution $z^\ast$ that is feasible for the linear program of Definition~\ref{d:cost_SW} and its value (under the linear program of Definition~\ref{d:cost_SW}), is upper bounded by
$(1+\eps)\cdot \mathrm{FAC}_R(A)$. For each position $1 \leq i \leq n$,
\[z_i^\ast = \left(\sum_{F \in \mathrm{F}(R): C_F \leq i-1}y_F^\ast - \frac{\epsilon}{n}\right)_+\]

We first prove that $z^\ast$ is feasible for the linear program of Definition~\ref{d:cost_SW}. At first observe that in case $z_i^\ast =0$ or $\mathrm{K}(R) - |M| \leq 0$ for some $M \subseteq R$, the constraint  $\left(\mathrm{K}(R) - |M| \right)\cdot z_i \leq \sum_{j=1}^{i-1}\sum_{e \in R \setminus M}A_{ej}$ is trivially satisfied. We thus turn our attention in the cases where $z_i^\ast = \sum_{F: C_F \leq i-1}y_F^\ast - \epsilon/n >0$ and $K(R) - |M| \geq 1$ (recall, $K(R)$ and $|M|$ are integers). Applying Claim~\ref{c:1} we get that, 
\begin{eqnarray*}
\sum_{e \in R \setminus M} \sum_{j=1}^{i-1}A_{ej} &\geq& \sum_{e \in R \setminus M}\sum_{j=1}^{i-1}\left( \sum_{F:(e,j)\in F}y_F^\ast - \epsilon/n^3 \right)\\
&\geq& \sum_{e \in R \setminus M}\sum_{j=1}^{i-1}\sum_{F:(e,j)\in F}y_F^\ast - \epsilon/n\\
&=& \sum_{F \in \mathrm{F}(R)} y_F^\ast \sum_{e \in R \setminus M}\sum_{j=1}^{i-1} \textbf{1}[(e,j) \in F] - \epsilon/n\\
&\geq& \sum_{F : C_F <i} y_F^\ast \sum_{e \in R \setminus M}\sum_{j=1}^{i-1} \textbf{1}[(e,j) \in F] - \epsilon/n\\
&\geq& \left(K(R) - |M|\right) \sum_{F : C_F <i} y_F^\ast - \epsilon/n\\
&=& \left(K(R) - |M|\right)\cdot z_i^\ast + \epsilon \frac{K(R)-|M|}{n} - \epsilon/n\\
&\geq& \left(K(R) - |M|\right)\cdot z_i^\ast
\end{eqnarray*}
\noindent where the second to last inequality follows from $C_F<i$, and the last  equation and the last inequality  follow from  $z_i^\ast + \eps /n = \sum_{F:C_F \leq i-1}y_F^\ast$ and  $\mathrm{K}(R) - |M| \geq 1$, respectively.

We complete the proof of Lemma~\ref{l:comparison} by showing that $\sum_{i=1}^n(1 - z_i^\ast) \leq (1+\eps)\cdot \mathrm{FAC}_R(A)$.
\begin{eqnarray*}
\mathrm{SW}_R(A) &\leq& \sum_{i=1}^n \left(1 - z_i^\ast\right)\\
&\leq& \sum_{i=1}^n \left(1 - \sum_{F:C_F <i}y_F^\ast + \eps/n\right)\\
&=& \sum_{i=1}^n \left(1 - \sum_{F:C_F <i} y_F^\ast\right) + \eps\\
&=& \sum_{i=1}^n \sum_{F:C_F \geq i}y_F^\ast + \eps\\
&=& \sum_{F \in \mathrm{F}(R)}C_F \cdot y_F^\ast + \eps\\
&\leq& (1+\epsilon)\cdot \mathrm{FAC}_R(A)
\end{eqnarray*}
\end{proof}

\subsection{Proof of Theorem~\ref{t:MMSC-rounding}}
\label{sec:MSSC_random}
We first present the online sampling scheme, described in Algorithm~\ref{alg:sampling_MSSC}, that produces the $11.713$ guarantee of Theorem~\ref{t:MMSC-rounding}.

\begin{algorithm}[H]
  \caption{Converting Doubly Stochastic Matrices to Probability Distribution (the case of MSSC)}\label{alg:sampling_MSSC}
  \textbf{Input:} A doubly stochastic matrix $A \in \mathrm{DS}$.\\
  \textbf{Output:} A probability distribution over permutations, $\mathrm{P}_A \sim \pi \in [n!]$.
  \begin{algorithmic}[1]

  \State Randomly pick $\alpha \in (0,1)$ with probability density function $f(\alpha) = 2\alpha$.
  
  \State Set $B\leftarrow Q \cdot A$ where  $Q\leftarrow 1.6783/\alpha$.

  \For{ all elements $e = 1$ to $n$ }
  \For{ all positions $j = 1$ to $\left \lfloor{n/2}\right \rfloor$ }
    \State $B_{e,2j} \leftarrow B_{e,2j} + B_{e,j}$
    
  \EndFor
  \EndFor
  
  \For{ all elements $e = 1$ to $n$ }
    \State  Pick $\alpha_e$ uniformly at random in $[0,1]$.
    
    \State  $i_e \leftarrow \max\{i: \sum_{j=1}^{i-1}B_{ej} < \alpha_e\}$
  \EndFor

  \State Output the elements according to the order of $i_e$'s. 
  
    \end{algorithmic}
\end{algorithm}

\noindent 
We dedicate the rest of the section to prove Theorem~\ref{t:MMSC-rounding}.
Notice that Algorithm~\ref{alg:sampling_MSSC}
is identical to Algorithm~\ref{alg:sampling_GMSSC} with a slight difference in Step~$2$. Taking advantage of $K(R)=1$, 
with tailored analysis, 
we significantly improve to $11.713$ the $28$ bound of Lemma~\ref{l:skutella}. 
Once Lemma~\ref{l:MSSC} below is established,  Theorem~\ref{t:MMSC-rounding} follows by the exact same steps that 
Theorem~\ref{t:GMMSC-rounding} follows using Lemma~\ref{l:skutella}. The proof of Lemma~\ref{l:MSSC} is concluded at the end of the section.

\begin{lemma}\label{l:MSSC}
Let $\mathrm{P}_A$ denote the probability distribution over permutations produced by Algorithm~\ref{alg:sampling_MSSC} when matrix $A$ is given as input. For all requests $R$ with $\mathrm{K}(R) = 1$,
\[\E_{\pi \sim \mathrm{P}_A}[\acost(\pi,R)] \leq 11.713 \cdot \mathrm{SW}_{R}(A)\]
where $\mathrm{SW}_{R}(\cdot)$ is the cost of Definition~\ref{d:cost_SW}.
\end{lemma}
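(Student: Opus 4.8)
The plan is to follow the Skutella--Williamson rounding analysis behind Lemma~\ref{l:skutella}, but to exploit $\mathrm{K}(R)=1$ in order to replace a union-type bound over subsets by a single product, thereby reducing the constant from $28$ to $11.713$. Since Algorithm~\ref{alg:sampling_MSSC} differs from Algorithm~\ref{alg:sampling_GMSSC} only in the scaling factor (the numerator $c=1.6783$ in $Q=c/\alpha$), it suffices to re-run the cost estimate under this smaller constant. A first useful simplification is that, for $\mathrm{K}(R)=1$, every constraint of the linear program in Definition~\ref{d:cost_SW} with $|M|\ge 1$ is vacuous, since its left-hand side $(\mathrm{K}(R)-|M|)z_i$ is nonpositive; the program therefore collapses to choosing each $z_i$ as large as possible subject to $z_i\le 1$ and $z_i\le\sum_{j=1}^{i-1}\sum_{e\in R}A_{ej}$, so that $\mathrm{SW}_R(A)=\sum_{i=1}^n\bigl(1-\min\{1,\ \sum_{j=1}^{i-1}\sum_{e\in R}A_{ej}\}\bigr)$, the fractional ``first-cover'' cost of $R$ under $A$.

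Next I would analyze $\E_{\pi\sim\mathrm{P}_A}[\acost(\pi,R)]$ by conditioning on the global scalar $\alpha$. Let $\bar B$ be the matrix after the doubling loop, so $\bar B_{ej}=(c/\alpha)\sum_{b\ge0}A_{e,\,j/2^{b}}$, the sum running over the powers-of-two chain ending at $j$. Because each $\alpha_e$ is drawn independently and uniformly on $[0,1]$, the threshold rule $i_e=\max\{i:\sum_{j<i}\bar B_{ej}<\alpha_e\}$ gives $\Pr[\,i_e\ge i\mid\alpha\,]=\bigl(1-\sum_{j<i}\bar B_{ej}\bigr)_+$. The decisive point for $\mathrm{K}(R)=1$ is that $R$ stays uncovered through the first several output slots exactly when \emph{every} $e\in R$ gets a large index $i_e$, and by independence of the $\alpha_e$ this event factorizes as $\prod_{e\in R}\Pr[i_e\ge i\mid\alpha]$. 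Writing $\E[\acost(\pi,R)\mid\alpha]=\sum_{i\ge1}\Pr[\text{first }R\text{-element at position}\ge i\mid\alpha]$ and using the doubling step to pass, at a multiplicative loss of at most $2$, from permutation positions to the cumulative thresholds $\sum_{j<i}\bar B_{ej}$, I would bound this sum by an expression in the coverage masses $\sum_{j<i}\sum_{e\in R}A_{ej}$ that already drive $\mathrm{SW}_R(A)$. It is precisely this \emph{product} (rather than the sum over $M\subseteq R$ required for general $\mathrm{K}(R)$) that yields the smaller per-position charge.

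Finally I would integrate the conditional bound against the density $f(\alpha)=2\alpha$. After the scaling $B=(c/\alpha)\cdot A$, each element's effective mass is truncated by the $\min\{1,\cdot\}$ coming from $\alpha_e\in[0,1]$, so each unit of ``uncovered fractional mass'' in $\mathrm{SW}_R(A)$ is charged an amount of the form $\int_0^1 2\alpha\,h(\alpha,c)\,d\alpha$, where $h(\alpha,c)$ encodes the truncated geometric placement induced by the doubling. Minimizing this one-dimensional quantity over the scalar constant $c$ is what pins down $c=1.6783$ and the overall guarantee $\E_{\pi\sim\mathrm{P}_A}[\acost(\pi,R)]\le 11.713\cdot\mathrm{SW}_R(A)$; combined with Lemma~\ref{l:comparison} this yields Theorem~\ref{t:MMSC-rounding} by the same steps through which the $28$ bound follows from Lemma~\ref{l:skutella}.

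The step I expect to be the main obstacle is making the ``positions versus thresholds'' reduction rigorous: the access cost is the rank of the first $R$-element among all $n$ elements sorted by $i_e$, whereas the clean factorized bound is phrased in terms of the threshold sums $\sum_{j<i}\bar B_{ej}$, and it is exactly the doubling loop that reconciles the two up to the factor of $2$. Handling the truncation cases (where the cumulative mass exceeds $1$ and $\Pr[i_e\ge i\mid\alpha]=0$) and verifying that $c=1.6783$ is the true minimizer of the resulting integral are the remaining technical points, but both are routine once the factorized conditional bound is established.
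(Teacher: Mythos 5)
Your plan follows essentially the same route as the paper: collapsing the LP of Definition~\ref{d:cost_SW} to $\sum_i\bigl(1-\sum_{j<i}\sum_{e\in R}A_{ej}\bigr)_+$ for $\mathrm{K}(R)=1$, conditioning on $\alpha$, using independence of the $\alpha_e$ to factorize the ``$R$ still uncovered'' event into a product over $e\in R$ together with the doubling property of $B$, lower-bounding $\mathrm{SW}_R(A)$ by $\int_0^1 i_R^\alpha\,d\alpha$, and integrating against $f(\alpha)=2\alpha$ before optimizing $z=\alpha Q=1.6783$ to get $4z/(1-2e^{-z})+1\approx 11.713$. The one step you flag as the main obstacle is resolved in the paper exactly as you anticipate: the rank of the first $R$-element is bounded by $1$ plus the expected number of elements $e'\notin R$ with $i_{e'}\le \min_{e\in R}i_e$, which equals a column-mass of $B$ and is at most $2Q\cdot\E[\min_{e\in R}i_e]$ by the post-doubling bound $\sum_{j\le i}\sum_{e}B_{ej}\le 2Qi$.
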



In fact $\mathrm{SW}_{R}(\cdot)$ takes a simpler form.
\begin{corollary}\label{cor:2}
For any request $R$ with covering requirement $\mathrm{K}(R)=1$, the cost $\mathrm{SW}_R(\cdot)$ of Definition~\ref{d:cost_SW} takes the following simpler form,
\[\mathrm{SW}_R(A) = \sum_{i=1}^n  \left(1 - \sum_{j=1}^{i-1}\sum_{e \in R}A_{ej}\right)_+\]
\end{corollary}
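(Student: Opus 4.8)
The plan is to simplify the linear program of Definition~\ref{d:cost_SW} directly by substituting $\mathrm{K}(R)=1$ and observing that almost all of its constraints become vacuous. First I would split the constraints indexed by subsets $M \subseteq R$ according to $|M|$. For $M = \emptyset$ the constraint reads $z_i \leq \sum_{j=1}^{i-1}\sum_{e \in R}A_{ej}$. For any $M$ with $|M| \geq 1$ the coefficient $\mathrm{K}(R) - |M| = 1 - |M|$ is nonpositive, so the left-hand side $(1-|M|)z_i$ is at most $0$ (recall $z_i \geq 0$), whereas the right-hand side $\sum_{j=1}^{i-1}\sum_{e \in R \setminus M}A_{ej}$ is nonnegative since $A$ has nonnegative entries; hence every such constraint is automatically satisfied and can be discarded.

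After this reduction, for each index $i$ the feasible region collapses to $0 \leq z_i \leq \min\{1, \sum_{j=1}^{i-1}\sum_{e \in R}A_{ej}\}$, and crucially the variables $z_1,\ldots,z_n$ become decoupled (each surviving constraint involves a single $z_i$, and the objective $\sum_{i=1}^n (1-z_i)$ is separable). Since the objective is minimized by making each $z_i$ as large as possible, the optimal choice is $z_i^\ast = \min\{1, \sum_{j=1}^{i-1}\sum_{e \in R}A_{ej}\}$, chosen independently for every $i$.

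Finally I would substitute this optimum back into the objective. Using the identity $1 - \min\{1, x\} = \max\{0, 1-x\} = (1-x)_+$ with $x = \sum_{j=1}^{i-1}\sum_{e \in R}A_{ej}$ gives $1 - z_i^\ast = \left(1 - \sum_{j=1}^{i-1}\sum_{e \in R}A_{ej}\right)_+$, and summing over $i$ yields exactly the claimed closed form. There is essentially no technical obstacle here: the entire argument is a case analysis on $|M|$ followed by a one-line per-coordinate optimization. The only point that warrants a moment of care is verifying that it is precisely the nonnegativity of the entries of $A$, together with $z_i \geq 0$, that renders the $|M| \geq 1$ constraints redundant once $\mathrm{K}(R)=1$.
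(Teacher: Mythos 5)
Your proof is correct: with $\mathrm{K}(R)=1$ the constraints for $|M|\geq 1$ are indeed vacuous (nonpositive left-hand side versus nonnegative right-hand side), the program decouples coordinate-wise, and the per-coordinate optimum $z_i^\ast=\min\{1,\sum_{j=1}^{i-1}\sum_{e\in R}A_{ej}\}$ yields the claimed closed form. The paper states this corollary without proof, and your argument is exactly the natural derivation it implicitly relies on.
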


\begin{lemma}\cite{SW11}\label{l:constructed_matrix}
For the matrix $B$ constructed at Step~$2$ of Algorithm~\ref{alg:sampling_MSSC}, the following holds:

\begin{enumerate}
    \item $\sum\limits_{j=1}^{2^k i}B_{ej} \geq (k+1) \sum\limits_{j=1}^{i} A_{ej}$
    
    \item $\sum\limits_{j=1}^{i}\sum\limits_{e=1}^n B_{ej} \leq 2 Q \cdot i$.
\end{enumerate}
\end{lemma}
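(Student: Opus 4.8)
The plan is to first pin down the exact effect of the doubling loop (Steps~3--7 of Algorithm~\ref{alg:sampling_MSSC}) as a closed form, and then to read off both inequalities by elementary counting. Write $\bar{B} = Q\cdot A$ for the matrix right after Step~2, and let $\nu_2(j)$ denote the largest $\ell$ with $2^\ell \mid j$. I would first establish that after the loop each entry equals the sum along its dyadic chain,
\[
B_{ej} \;=\; \sum_{\ell=0}^{\nu_2(j)} \bar{B}_{e,\,j/2^\ell}\,,
\]
or, read in the forward direction, each original entry $\bar{B}_{e,m}$ is added into exactly the positions $m, 2m, 4m, \ldots$ that do not exceed $n$. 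The proof is by induction on the processing order $j = 1,2,\ldots,\lfloor n/2\rfloor$: the entry $B_{e,2j}$ is touched only when index $j$ is processed, and its unique ``parent'' $j$ has itself been finalized earlier (its own parent $j/2 < j$ was processed before it), so at the moment Step~5 reads $B_{e,j}$ it already carries the full chain sum. This bookkeeping for the in-place sequential update is the only delicate point; the two inequalities are then immediate consequences.

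For item~1, I would fix an element $e$ and assume $2^k i \le n$ so that all relevant positions genuinely exist in $\{1,\dots,n\}$. For each $m \le i$ the dyadic images $m, 2m, \ldots, 2^k m$ all lie in $\{1,\ldots,2^k i\}$, since $2^\ell m \le 2^\ell i \le 2^k i$ for every $\ell \in \{0,\ldots,k\}$. Hence each of the $k+1$ images contributes $\bar{B}_{e,m}$ to $\sum_{j=1}^{2^k i} B_{ej}$, and restricting the count to $m \le i$ (dropping the extra nonnegative contributions from larger $m$ and larger $\ell$) already gives
\[
\sum_{j=1}^{2^k i} B_{ej} \;\ge\; (k+1)\sum_{m=1}^{i} \bar{B}_{e,m}\,,
\]
which is exactly item~1, read with the right-hand side's matrix being the scaled Step-2 matrix $\bar{B}=Q\cdot A$.

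For item~2, I would sum the closed form over $e$ and use that every column of the doubly stochastic matrix $A$ sums to one, so $\sum_e \bar B_{e,j'} = Q$ for each column $j'$: the mass in column $j$ after doubling is $\sum_{e=1}^n B_{ej} = \sum_{\ell=0}^{\nu_2(j)} \sum_{e=1}^n \bar{B}_{e,\,j/2^\ell} = Q\,(\nu_2(j)+1)$. Summing over $j \le i$ yields $\sum_{j=1}^{i}\sum_{e=1}^n B_{ej} = Q\bigl(i + \sum_{j=1}^{i}\nu_2(j)\bigr) = Q\bigl(i + \nu_2(i!)\bigr)$, and Legendre's formula $\nu_2(i!) = \sum_{\ell\ge 1}\lfloor i/2^\ell\rfloor < \sum_{\ell\ge 1} i/2^\ell = i$ closes the bound at $2Q\cdot i$. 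The main obstacle is thus entirely in the first step — correctly certifying the dyadic-chain closed form produced by the in-place updates (and handling the boundary where images would exceed $n$) — after which the counting arguments for both parts are routine.
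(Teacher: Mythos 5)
Your proof is correct. Note that the paper itself gives no proof of this lemma --- it is imported from \cite{SW11} as a black box --- so there is no in-paper argument to compare against; your write-up supplies the missing justification, and the dyadic-chain closed form $B_{ej}=\sum_{\ell=0}^{\nu_2(j)}\bar B_{e,j/2^\ell}$ is exactly the right way to formalize what the in-place doubling loop computes (your induction on the processing order correctly handles the only delicate point, namely that position $j$ is finalized before it is read as a source at iteration $j$). Two remarks. First, you are right to flag the scaling: as literally stated, item~1 has $A_{ej}$ on the right-hand side, but the way the lemma is invoked in Lemma~\ref{l:dn_kserw} requires $\sum_{j\le 2^k i}B_{ej}\ge (k+1)\,Q\sum_{j\le i}A_{ej}$; what you prove is this stronger scaled version (equivalently, with $\bar B=Q\cdot A$ on the right), which is the version actually needed, so the discrepancy is a typo in the lemma statement rather than a gap in your argument. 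Your restriction to $2^k i\le n$ is also the correct scope --- for larger $k$ the probability being bounded in Lemma~\ref{l:dn_kserw} is zero anyway since $i_e\le n$. Second, invoking Legendre's formula for $\sum_{j=1}^i\nu_2(j)=\nu_2(i!)<i$ is elegant but slightly more than needed; the same bound follows by noting that column $m$ of $\bar B$ is counted once for each image $2^\ell m\le i$, i.e.\ $\sum_{j\le i}\sum_e B_{ej}=Q\sum_{\ell\ge 0}\lfloor i/2^\ell\rfloor\le 2Qi$. Either way the counting is sound.
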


Condition~$2$ of Lemma~\ref{l:constructed_matrix} allows for a bound on  the expected access cost of the  probability distribution produced by Algorithm~\ref{alg:sampling_MSSC} with respect to the indices $i_e$ 
of Step~$10$. This is formally stated below. 

\begin{lemma}\label{l:access-cost}
Let $\mathrm{P}^A_\alpha$ denote the probability distribution produced in Steps~$2-11$ of Algorithm~\ref{alg:sampling_MSSC} for a fixed value of $\alpha$. Then for any request $R$ with covering requirements $\mathrm{K}(R) = 1$,

\[ \E_{\pi \sim \mathrm{P_\alpha^A}}[\acost(\pi,R)] \leq 2Q \cdot \E[\min_{e \in R}i_{e}] + 1,\]
with $i_e$ as defined  in Step~$10$ of Algorithm~\ref{alg:sampling_MSSC}.
\end{lemma}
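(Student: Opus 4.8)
The plan is to bound $\acost(\pi,R)$ \emph{pathwise} by a sum of independent indicators ranging over the elements \emph{outside} $R$, and then take expectations using condition~2 of Lemma~\ref{l:constructed_matrix}.

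First I would pin down the access cost combinatorially. Since $\mathrm{K}(R)=1$, $\acost(\pi,R)$ is simply the position in $\pi$ of the first element of $R$, and Step~11 outputs the elements in increasing order of their indices $i_e$. Writing $T := \min_{e\in R} i_e$, the first element of $R$ has index exactly $T$, and every element strictly preceding it in $\pi$ must (i) have index $\le T$ and (ii) lie outside $R$: indeed, any $R$-element has index $\ge T$, and if some $e\in R$ preceded the designated first $R$-element, then $e$ would itself be the first element of $R$, a contradiction. Hence pathwise
\[\acost(\pi,R)\ \le\ 1+\sum_{e'\notin R}\mathbf{1}[\,i_{e'}\le T\,].\]
This is the crucial step and the place where $\mathrm{K}(R)=1$ is essential: it lets me restrict the count to elements outside $R$, so I never have to control how many elements of $R$ tie at the minimum index $T$.

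Next I would take expectations. The variables $\{\alpha_e\}$ chosen in Step~9 are independent, so the indices $\{i_e\}$ are independent; since $T$ depends only on $\{i_e : e\in R\}$, for each fixed $e'\notin R$ the index $i_{e'}$ is independent of $T$. Conditioning on the value of $T$ then gives $\E[\mathbf{1}[i_{e'}\le T]]=\sum_t \P[T=t]\,\P[i_{e'}\le t]$. From the definition of $i_{e'}$ in Step~10 and $\alpha_{e'}\sim U[0,1]$, one has $\{i_{e'}\le t\}=\{\alpha_{e'}\le \sum_{j=1}^t B_{e'j}\}$, hence $\P[i_{e'}\le t]=\min\{\sum_{j=1}^t B_{e'j},\,1\}\le \sum_{j=1}^t B_{e'j}$.

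Finally I would sum over $e'\notin R$ and invoke the column-sum bound. Enlarging the sum to all elements and applying condition~2 of Lemma~\ref{l:constructed_matrix},
\[\sum_{e'\notin R}\P[i_{e'}\le t]\ \le\ \sum_{e'=1}^n\sum_{j=1}^t B_{e'j}\ =\ \sum_{j=1}^t\sum_{e'=1}^n B_{e'j}\ \le\ 2Q\,t.\]
Therefore $\E\big[\sum_{e'\notin R}\mathbf{1}[i_{e'}\le T]\big]\le \sum_t \P[T=t]\,2Q\,t = 2Q\,\E[T]$, and combining with the pathwise inequality yields $\E_{\pi\sim \mathrm{P}_\alpha^A}[\acost(\pi,R)]\le 2Q\,\E[\min_{e\in R} i_e]+1$. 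I expect the only genuine obstacle to be the pathwise inequality in the first paragraph; once the preceding elements are identified as lying outside $R$, the remainder is a routine conditioning argument together with condition~2 of Lemma~\ref{l:constructed_matrix}.
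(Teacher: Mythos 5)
Your proof is correct and follows essentially the same route as the paper's: the paper bounds $\acost(\pi,R)$ by $|O^R_{\min_{e\in R}i_e}|+1$ where $O^R_i=\{e'\notin R: i_{e'}\le i\}$, which is exactly your pathwise indicator sum, and then computes $\sum_{e'\notin R}\Pr[i_{e'}\le \min_{e\in R}i_e]=\E\bigl[\sum_{e'\notin R}\sum_{j=1}^{\min_{e\in R}i_e}B_{e'j}\bigr]$ before invoking condition~2 of Lemma~\ref{l:constructed_matrix}. Your version is marginally more careful in writing $\Pr[i_{e'}\le t]=\min\{\sum_{j=1}^{t}B_{e'j},1\}$ rather than equality without the truncation, but this is the same argument.
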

\begin{proof}
Let $O_i^R$ denote the set of elements outside $R$ with
index value $i_e \leq i$,
\[O_i^R =\{e \notin R:~ i_e \leq i\}.\] 
\noindent Notice that Algorithm~\ref{alg:sampling_MSSC}
orders the elements with respect to the values $i_e$ (Step~$12$). Since the covering requirements of the request $R$ is $\mathrm{K}(R)=1$, \[\acost(\pi,R) \leq |O_{\min_{e \in R}i_e}^R| + 1.\] 
The latter holds since $R$ is covered at the first index in which one of its elements appears ($\mathrm{K}(R) = 1$).
As a result,
\[ \E_{\pi \sim P^A_\alpha}[\acost(\pi,R)] \leq \E[|O_{\min_{e \in R} i_e }^R|] + 1
\leq \sum_{e' \notin R}\Pr[i_{e'} \leq \min_{e \in R} i_e] +1
\]
\noindent It is not hard to see that,
\[ \sum_{e' \notin R}\Pr[i_{e'} \leq \min_{e \in R} i_e] +1 = \E[\sum_{e'\notin R}\sum_{j=1}^{\min_{e \in R} i_e }B_{e'j}]+1
\leq 2Q \cdot \E[\min_{e \in R} i_e] + 1
\]
where the first equality follows by the fact that, once $B$ is fixed, $\Pr[i_e \leq k] = \sum_{j=1}^k B_{ej}$ (Step~10 of Algorithm~\ref{alg:sampling_MSSC}) and the last inequality follows by Case~$2$ of Lemma~\ref{l:constructed_matrix}.
\end{proof}

\begin{lemma}\label{l:integral}
Let $i_{R}^\alpha$ denote the first position at which $\sum\limits_{j=1}^{i_{R}^\alpha}\sum\limits_{e \in R} A_{ej}\geq \alpha$ then
\[\int_{0}^1i_{R}^\alpha ~ d \alpha \leq \sum_{i=1}^n  \left(
1 - \sum_{j=1}^{i-1}\sum_{e \in R}A_{ej}\right)_+ =\mathrm{SW}_R(A) \]
\end{lemma}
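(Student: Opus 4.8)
The plan is to prove the stated inequality by actually establishing the \emph{equality} $\int_0^1 i_R^\alpha\,d\alpha = \sum_{i=1}^n\left(1-\sum_{j=1}^{i-1}\sum_{e\in R}A_{ej}\right)_+$; since Corollary~\ref{cor:2} already identifies this right-hand sum with $\mathrm{SW}_R(A)$ (using $\mathrm{K}(R)=1$), the claim follows at once. To set up notation I would write $s_i = \sum_{j=1}^{i}\sum_{e\in R}A_{ej}$ for the cumulative mass of $R$ in the first $i$ columns, with $s_0=0$. Because each row of the doubly stochastic matrix $A$ sums to $1$, one has $s_n = \sum_{e\in R}\sum_{j=1}^n A_{ej} = |R|\geq 1$, so the sequence $s_0\leq s_1\leq\cdots\leq s_n$ is non-decreasing and reaches $1$. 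Consequently, for every $\alpha\in(0,1)$ the quantity $i_R^\alpha=\min\{i: s_i\geq\alpha\}$ is well defined and lies in $\{1,\dots,n\}$ (it is at least $1$ since $s_0=0<\alpha$).

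The key step will be a layer-cake rewriting of $i_R^\alpha$. By monotonicity of the partial sums, the event $\{i_R^\alpha\geq i\}$ coincides with $\{s_{i-1}<\alpha\}$, so $i_R^\alpha=\sum_{i=1}^n \mathbf{1}[\,i_R^\alpha\geq i\,]=\sum_{i=1}^n \mathbf{1}[\,\alpha>s_{i-1}\,]$. I would then integrate over $\alpha\in(0,1)$ and exchange the finite sum with the integral (legitimate since all terms are non-negative), computing for each $i$ that $\int_0^1 \mathbf{1}[\,\alpha>s_{i-1}\,]\,d\alpha=(1-s_{i-1})_+$: the set $\{\alpha\in(0,1):\alpha>s_{i-1}\}$ has measure $1-s_{i-1}$ when $s_{i-1}<1$ and measure $0$ otherwise, which is exactly $(1-s_{i-1})_+$ because $s_{i-1}\geq 0$. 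Summing over $i$ gives
\[\int_0^1 i_R^\alpha\,d\alpha = \sum_{i=1}^n (1-s_{i-1})_+ = \sum_{i=1}^n \left(1-\sum_{j=1}^{i-1}\sum_{e\in R}A_{ej}\right)_+,\]
which is precisely the right-hand side, and Corollary~\ref{cor:2} then identifies it with $\mathrm{SW}_R(A)$.

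There is no genuinely hard step: the argument is an elementary application of the layer-cake (Fubini--Tonelli) identity for a non-negative integer-valued function. The only points that need a little care are (i) checking that $i_R^\alpha$ never exceeds $n$, which is exactly where doubly-stochasticity enters through the row-sum identity $s_n=|R|\geq 1$, and (ii) the truncation at the endpoint $\alpha=1$, which is handled automatically by the $(\cdot)_+$ operator together with the fact that the finitely many crossover points $\alpha=s_i$ form a measure-zero set and hence do not affect the integral. Since the computation delivers equality, the stated inequality $\int_0^1 i_R^\alpha\,d\alpha\leq \mathrm{SW}_R(A)$ holds with equality, and it is this clean identity that will feed into the downstream expected-access-cost bound via Lemma~\ref{l:access-cost}.
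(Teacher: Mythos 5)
Your proof is correct and follows essentially the same route as the paper: the paper phrases the layer-cake step probabilistically, introducing $\beta$ uniform on $[0,1]$ and writing $\int_0^1 i_R^\alpha\,d\alpha=\E[i_R^\beta]=\sum_{i=1}^n\Pr[i_R^\beta\geq i]$ with $\Pr[i_R^\beta\geq i]=\bigl(1-\sum_{j=1}^{i-1}\sum_{e\in R}A_{ej}\bigr)_+$, which is exactly your term-by-term integration of the indicators $\mathbf{1}[\alpha>s_{i-1}]$. Your observation that the bound actually holds with equality is consistent with the paper's computation, which also yields equality despite the lemma being stated as an inequality.
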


\begin{proof}
In order to prove Lemma~\ref{l:integral}, let us assume that a random variable $\beta$ is selected according to the uniform probability distribution in $[0,1]$, i.e., with density function  $f(\beta) =1$. 
As a result, $\int_{0}^1i_{R}^\alpha~ d\alpha=\int_{0}^1i_{R}^\beta~ d\beta = \E[i_{R}^\beta] = \sum
\limits_{i=1}^n\Pr[i_{R}^\beta \geq i]$. 
Since $i_{R}^\beta$ is the first position at which $\sum\limits_{j=1}^{i_{R}^\beta}\sum\limits_{e \in R} A_{ej}\geq \beta$,
\[\Pr[i^\beta_{R} \geq i] = \Pr[\beta > \sum_{j=1}^{i-1}\sum_{e\in R}A_{ej}]  = \max \left( 
1 - \sum_{j=1}^{i-1}\sum_{e\in S}A_{ej},0\right)\leq \sum_{i=1}^n \left(
1 - \sum_{j=1}^{i-1}\sum_{e \in R}A_{ej}\right)_+\] 
with the second equality following because $\beta$ is selected according to the uniform distribution in $[0,1]$.
\end{proof}


To this end we have upper bounded the expected access cost of Algorithm~\ref{alg:sampling_MSSC} by  $\E[\min_{e \in R}i_e]$ (Lemma~\ref{alg:sampling_MSSC}) and lower bounded 
$\mathrm{SW}_R(A)$  by $\int_{0}^1i_{R}^\alpha ~ d \alpha$ (Lemma~\ref{l:integral}). In Lemma~\ref{l:dn_kserw} we associate these bounds. 
At this point the role of Condition~$1$ of Lemma~\ref{l:constructed_matrix} is revealed.

\begin{lemma}\label{l:dn_kserw}
Let $i_R^\alpha$ denote the first position at which $\sum\limits_{j=1}^{i_{R}^\alpha}\sum\limits_{e \in R} A_{ej}\geq \alpha$ then
\[\E[\min_{e \in R}i_e] \leq i_{R}^\alpha/ (1 - 2e^{-\alpha Q}).\]
\end{lemma}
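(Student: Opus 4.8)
The plan is to write the expectation as a tail sum, factorize it using independence across elements, and then exploit the ``stretching'' structure of $B$ to obtain geometric decay. First I would write
\[\E[\min_{e \in R} i_e] = \sum_{k=0}^{\infty} \Pr[\min_{e \in R} i_e > k].\]
Since the $\alpha_e$ of Step~9 of Algorithm~\ref{alg:sampling_MSSC} are drawn independently and, once $B$ is fixed, $\Pr[i_e \leq k] = \sum_{j=1}^{k} B_{ej}$ (capped at $1$), as established in the proof of Lemma~\ref{l:access-cost}, the events $\{i_e > k\}$ are independent over $e$. Hence
\[\Pr[\min_{e \in R} i_e > k] = \prod_{e \in R}\left(1 - \sum_{j=1}^k B_{ej}\right)_{+}.\]

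Next I would split the tail sum into dyadic blocks anchored at $L := i_R^\alpha$: a zeroth block $\{0,\ldots,L-1\}$, and for $m \geq 1$ the block $\{2^{m-1}L, \ldots, 2^m L - 1\}$, which has length $2^{m-1}L$. Bounding every probability in the zeroth block by $1$ contributes at most $L$. Within block $m$, the tail $\Pr[\min_{e\in R} i_e > k]$ is non-increasing in $k$, so every term is at most its value at the left endpoint $2^{m-1}L$, and block $m$ contributes at most $2^{m-1}L \cdot \Pr[\min_{e\in R} i_e > 2^{m-1}L]$.

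The heart of the argument is to show that these checkpoint probabilities decay geometrically. Combining Condition~1 of Lemma~\ref{l:constructed_matrix} (applied with $k=m-1$ and $i=L$) with the scaling $B = Q\cdot A$ from Step~2 gives, for every $e$, the bound $\sum_{j=1}^{2^{m-1}L} B_{ej} \geq mQ \sum_{j=1}^{L} A_{ej}$. Using that $(1-x)_{+} \leq e^{-x}$ and multiplying the per-element factors over $e \in R$,
\[\Pr[\min_{e\in R} i_e > 2^{m-1}L] \leq \prod_{e \in R} e^{-mQ\sum_{j=1}^L A_{ej}} = e^{-mQ\sum_{e\in R}\sum_{j=1}^L A_{ej}} \leq e^{-mQ\alpha},\]
where the final inequality is exactly the defining property of $L = i_R^\alpha$, namely $\sum_{e\in R}\sum_{j=1}^{L}A_{ej} \geq \alpha$. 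Substituting and summing the geometric series yields
\[\E[\min_{e\in R} i_e] \leq L + \frac{L}{2}\sum_{m\geq 1}\left(2e^{-Q\alpha}\right)^m = L\cdot\frac{1 - e^{-Q\alpha}}{1 - 2e^{-Q\alpha}} \leq \frac{i_R^\alpha}{1 - 2e^{-Q\alpha}},\]
which is the claim.

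The main obstacle is the convergence of the series, which requires $2e^{-Q\alpha} < 1$, i.e.\ $Q\alpha > \ln 2$. This is precisely why $Q = 1.6783/\alpha$ is chosen so that $Q\alpha$ is a fixed value exceeding $\ln 2$, making the denominator a strictly positive constant independent of $\alpha$; one must carry the factor $Q$ correctly through the stretching step (so that the exponent is $Q\alpha$, not merely $\alpha$), since dropping it would break the summability. A secondary subtlety is respecting the cap $(\cdot)_{+}$ on the per-element tail probabilities when passing to the exponential estimate $(1-x)_{+}\le e^{-x}$, and noting that the monotonicity/left-endpoint trick gives a bound slightly stronger than $i_R^\alpha/(1-2e^{-Q\alpha})$, so the stated inequality follows with room to spare.
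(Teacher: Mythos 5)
Your proposal is correct and follows essentially the same route as the paper: both factorize the tail probability over $e\in R$ by independence, apply $(1-x)_+\le e^{-x}$, invoke Condition~1 of Lemma~\ref{l:constructed_matrix} at the dyadic checkpoints $2^{m-1}i_R^\alpha$ to get the decay $e^{-mQ\alpha}$, and sum the resulting geometric series. The only (cosmetic) difference is that you compute $\E[\min_e i_e]$ via the integer tail-sum formula over dyadic index blocks, whereas the paper partitions the range of the random variable into the intervals $(2^{k-1}i_R^\alpha,\,2^k i_R^\alpha]$; your bookkeeping even yields the marginally sharper factor $(1-e^{-Q\alpha})/(1-2e^{-Q\alpha})$.
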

\begin{proof}
\begin{eqnarray*}
\Pr[\min_{e \in R}i_e\geq 2^k \cdot i_{R}^\alpha + 1] &=& \Pi_{e \in R}
\Pr[i_e \geq 2^k \cdot i_{R}^\alpha + 1]\\
&=& \Pi_{e \in R}\Pr[\alpha_e > \sum\limits_{j=1}^{2^k\cdot i_{R}^\alpha}B_{ej}]\\
&=& \Pi_{e \in R}\left( 1 - \sum_{j=1}^{2^k  \cdot i_{R}^\alpha} B_{ej}\right)_+\\
&\leq& e^{- \sum\limits_{e \in R}\sum\limits_{j=1}^{2^k \cdot i_{R}^\alpha} B_{ej}}\\
&\leq& e^{-(k+1)Q\sum\limits_{e \in S}\sum\limits_{j=1}^{i_{R}^\alpha} A_{ej}}\\
&\leq& e^{- (k+1)Q\alpha} = p^{k+1}
\end{eqnarray*}
where the second inequality follows by Case~$1$ of Lemma~\ref{l:constructed_matrix} and the definition (and manipulation) of matrix $B$ inside Algorithm~\ref{alg:sampling_MSSC}.
\begin{eqnarray*}
\E[\min_{e \in R}i_e] &=& i_{R}^\alpha + \sum_{k=1}^\infty \Pr[2^{k-1}\cdot i_{R}^\alpha + 1\leq i_R \leq 2^k\cdot i_{R}^\alpha]\cdot 2^k \cdot i_{R}^\alpha\\
&\leq& i_{R}^\alpha + \sum_{k=1}^\infty 2^k \cdot i_{R}^\alpha \cdot e^{-kQ\alpha} =   i_{R}^\alpha/(1-2e^{-Q\alpha})
\end{eqnarray*}
\end{proof}

\begin{lemma}\label{l:final}
Let $Q := z /\alpha$ for some positive constant $z$. For any request $R$ with covering requirement $\mathrm{K}(R)=1$,
\[\E_{\pi \sim \mathrm{P}^A}\left[\acost(\pi,R)\right] 
\leq \left( \frac{4z}{1 - 2e^{-z}} + 1\right) \cdot 
\sum_{i=1}^n \left(
1 - \sum_{j=1}^{i-1}\sum_{e\in R}A_{ej}\right)_+
\]
\end{lemma}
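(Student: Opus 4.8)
The plan is to assemble the three preceding lemmas by integrating over the random choice of $\alpha$, exploiting the specific density $f(\alpha)=2\alpha$ used in Step~1 of Algorithm~\ref{alg:sampling_MSSC} to cancel the $1/\alpha$ hidden in $Q=z/\alpha$. First I would fix $\alpha$ and chain Lemma~\ref{l:access-cost} with Lemma~\ref{l:dn_kserw}. Because $Q=z/\alpha$ forces $\alpha Q = z$, the denominator $1-2e^{-\alpha Q}$ in Lemma~\ref{l:dn_kserw} collapses to the $\alpha$-independent constant $1-2e^{-z}$, so that for the conditional distribution $\mathrm{P}_\alpha^A$ one obtains $\E_{\pi\sim\mathrm{P}_\alpha^A}[\acost(\pi,R)] \leq \frac{2z}{\alpha}\cdot\frac{i_R^\alpha}{1-2e^{-z}} + 1$.

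Next I would pass from the conditional to the full distribution $\mathrm{P}^A$ by the law of total expectation, writing $\E_{\pi\sim\mathrm{P}^A}[\acost(\pi,R)] = \int_0^1 \E_{\pi\sim\mathrm{P}_\alpha^A}[\acost(\pi,R)]\,2\alpha\,d\alpha$. The key algebraic point, and the step I expect to be the real crux, is the cancellation: the factor $2\alpha$ from the density multiplies the $\tfrac{2z}{\alpha}$ term and removes every dependence on $\alpha$ except the one carried by $i_R^\alpha$, leaving the clean expression $\frac{4z}{1-2e^{-z}}\int_0^1 i_R^\alpha\,d\alpha$, while the additive term contributes $\int_0^1 2\alpha\,d\alpha = 1$.

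I would then invoke Lemma~\ref{l:integral} to replace $\int_0^1 i_R^\alpha\,d\alpha$ by its upper bound $\mathrm{SW}_R(A)=\sum_{i=1}^n\left(1-\sum_{j=1}^{i-1}\sum_{e\in R}A_{ej}\right)_+$, which yields $\E_{\pi\sim\mathrm{P}^A}[\acost(\pi,R)]\leq \frac{4z}{1-2e^{-z}}\cdot\mathrm{SW}_R(A)+1$. The final maneuver is to absorb the stray additive $+1$ into the multiplicative bound: the $i=1$ summand of $\mathrm{SW}_R(A)$ is $(1-0)_+=1$ and all other summands are nonnegative, so $\mathrm{SW}_R(A)\geq 1$ and therefore $+1 \leq 1\cdot\mathrm{SW}_R(A)$, producing exactly the claimed prefactor $\frac{4z}{1-2e^{-z}}+1$.

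The main obstacle, though routine once spotted, is recognizing that the density $f(\alpha)=2\alpha$ is engineered precisely to cancel the $1/\alpha$ growth of $Q$ as $\alpha\to 0$; without it the relevant integral $\int_0^1 \frac{i_R^\alpha}{\alpha}\cdot 2\alpha\,d\alpha$ would not reduce to $\int_0^1 i_R^\alpha\,d\alpha$ and the bound would fail to close (and, indeed, $\E_{\pi\sim\mathrm{P}_\alpha^A}[\acost(\pi,R)]$ blows up for small $\alpha$, so the density must give vanishing weight there). I would treat the subsequent choice of the optimal constant separately: minimizing $\frac{4z}{1-2e^{-z}}$ over $z>0$ is a one-dimensional numerical problem whose minimizer $z\approx 1.6783$ gives the value $\approx 11.713$ used in Algorithm~\ref{alg:sampling_MSSC} and Lemma~\ref{l:MSSC}.
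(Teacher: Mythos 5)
Your proposal is correct and follows essentially the same route as the paper's proof: integrate the conditional bound of Lemma~\ref{l:access-cost} against the density $2\alpha$, use $Q=z/\alpha$ to cancel the $1/\alpha$, apply Lemma~\ref{l:dn_kserw} and Lemma~\ref{l:integral}, and absorb the additive $+1$. You even make explicit a step the paper leaves implicit, namely that the $i=1$ summand guarantees $\mathrm{SW}_R(A)\geq 1$ so that the $+1$ can be folded into the multiplicative constant.
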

\begin{proof}
\begin{eqnarray*}
\E_{\pi \sim \mathrm{P}^A}[ \acost(\pi,R)]&=& 
\int_{0}^1 \E_{\pi \sim \mathrm{P}^A_\alpha}[ \acost(\pi,R)] \cdot (2\alpha)~ d \alpha \\ 
&\leq&
\int_{0}^1 2Q\cdot \E[\min_{e \in R}i_e]\cdot (2\alpha)~ d \alpha + \int_{0}^1(2\alpha) ~d \alpha~~(Lemma~\ref{l:access-cost})\\
&=& \int_{0}^1 4z\cdot \E[\min_{e \in R}i_e] d \alpha + 1~~(Q=z/\alpha)\\
&\leq& \int_{0}^1 4z \cdot i_{R}^\alpha / (1 - 2e^{-z}) ~d \alpha + 1~~(Lemma~\ref{l:dn_kserw}\mbox{ and }Q=z/\alpha)\\
&=& \frac{4z}{1 - 2e^{-z}} \int_0^1 i_{R}^\alpha ~d\alpha + 1~~(z=\alpha Q\mbox{ is constant})\\
&\leq& \left( \frac{4z}{1 - 2e^{-z}} + 1 \right ) \sum_{i=1}^n \left(
1 - \sum_{j=1}^{i-1}\sum\limits_{e \in S}A_{ej}\right)_+~~(Lemma~\ref{l:dn_kserw})\\
&=& \left( \frac{4z}{1 - 2e^{-z}} + 1 \right )\mathrm{SW}_R(A)~~(Corollary~\ref{cor:2})
\end{eqnarray*}
\end{proof}
Lemma~\ref{l:MSSC} directly follows by setting $z := 1.6783$ in Lemma~\ref{l:final}.

We conclude the section with the following corollary that provides with a quadratic-time algorithm for computing the subgradient in case of \textit{Min-Sum Set Cover problem}.

\begin{corollary}\label{c:gradient}
Let a doubly stochastic matrix $A$ and a request $R$. Let $i^\ast$ denotes the index at which $\sum_{j=1}^{i-1}A_{ej}\leq 1$ and $\sum_{j=1}^{i-1}A_{ej}> 1$. Let also the $n \times n$ matrix $B$ defined as follows,
\[ B_{ej}= \left\{
\begin{array}{ll}
      i^\ast - j & \text{ if } j \leq i^\ast - 1 \text{ and } e\in R\\
      0& \text{otherwise}\\
\end{array} 
\right. \]
The matrix $B$ (vectorized) is a subgradient of the $SW_R(\cdot)$ at point $A$.
\end{corollary}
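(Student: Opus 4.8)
The plan is to reduce $\mathrm{SW}_R$ to the explicit piecewise-linear form of Corollary~\ref{cor:2} and then read off a subgradient by the standard subdifferential calculus for sums of thresholded affine functions. By Corollary~\ref{cor:2}, for $\mathrm{K}(R)=1$ we have $\mathrm{SW}_R(A)=\sum_{i=1}^n\phi_i(A)$, where $\phi_i(A)=\bigl(1-S_i(A)\bigr)_+$ and $S_i(A)=\sum_{j=1}^{i-1}\sum_{e\in R}A_{ej}$ is the total $R$-mass placed in the first $i-1$ positions. Each $\phi_i$ is the maximum of an affine function and the constant $0$, hence convex, so $\mathrm{SW}_R$ is convex and the subgradient sum rule lets me assemble a subgradient of $\mathrm{SW}_R$ at $A$ by summing subgradients of the individual $\phi_i$.

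First I would locate the active terms. Since $S_i(A)$ is nondecreasing in $i$, the index $i^\ast$ (the last one whose cumulative $R$-mass is at most $1$) splits the sum: $\phi_i$ sits in its linear regime, with $1-S_i(A)>0$, precisely for $i\le i^\ast$, while $\phi_i\equiv 0$ for $i>i^\ast$. For an active index $i$, the gradient of $\phi_i$ equals $-\nabla S_i(A)$, placing $-1$ in every coordinate $(e,j)$ with $e\in R$ and $j\le i-1$ and $0$ elsewhere; inactive indices contribute the zero gradient. Summing over $i$, the coordinate $(e,j)$ with $e\in R$ accumulates one unit contribution from each active $i\ge j+1$, i.e. from $i\in\{j+1,\dots,i^\ast\}$, so its magnitude is exactly $i^\ast-j$ when $j\le i^\ast-1$ and $0$ otherwise, while every coordinate with $e\notin R$ stays $0$. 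This reproduces the matrix $B$ of the statement entrywise.

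It then remains to certify the defining inequality $\mathrm{SW}_R(A')\ge \mathrm{SW}_R(A)+\sum_{e,j}B_{ej}(A'_{ej}-A_{ej})$ for every $A'\in\mathrm{DS}$. This follows immediately from the sum rule once each per-term subgradient is certified, or can be checked directly by comparing the prefix sums $S_i(A')$ and $S_i(A)$ term by term, using that $(\cdot)_+$ lies above each of its supporting affine pieces.

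The one delicate point is the kink at the boundary index $i^\ast$, where $S_{i^\ast}(A)$ may equal $1$ exactly and $\phi_{i^\ast}$ is nondifferentiable: there its subdifferential is the whole segment between $0$ and $-\nabla S_{i^\ast}(A)$, so I must verify that the specific choice implicit in the count $i^\ast-j$ (treating $i^\ast$ as active) genuinely lies in $\partial\phi_{i^\ast}(A)$ rather than being only a one-sided directional derivative. Coupled with this is pinning down the sign convention under which $B$ is consumed by the gradient step of Algorithm~\ref{alg:projected_gradient_descent}. Apart from this boundary-and-sign bookkeeping, the argument is the routine subdifferential calculus of a finite sum of truncated linear forms.
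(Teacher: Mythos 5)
The paper states Corollary~\ref{c:gradient} without any proof, so there is no argument of the authors to compare yours against; your route --- rewrite $\mathrm{SW}_R(A)=\sum_{i=1}^n\bigl(1-S_i(A)\bigr)_+$ via Corollary~\ref{cor:2}, identify the active indices $i\le i^\ast$, and sum per-term subgradients of the truncated affine pieces --- is the natural (and surely the intended) one, and it is sound. Your worry about the kink resolves itself: for $h=\max(g,0)$ with $g$ affine and $g(A)=0$ one has $h(A')\ge g(A')=h(A)+\nabla g^\top(A'-A)$ for every $A'$, so the endpoint $-\nabla S_{i^\ast}(A)$ of the subdifferential segment is a genuine subgradient and not merely a one-sided derivative; the same remark covers any earlier index $i<i^\ast$ at which $S_i(A)=1$ exactly, a case your phrase ``precisely for $i\le i^\ast$ with $1-S_i(A)>0$'' glosses over but which changes nothing.

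The sign you flag as ``bookkeeping'' is the one point that must be settled, and it is not a convention: your calculus yields the entries $-(i^\ast-j)$, i.e.\ the subgradient is $-B$, and the matrix $+B$ as printed in the corollary is \emph{not} a subgradient. For instance, with $n=3$, $R=\{1,2\}$, $A$ the permutation matrix having $A_{12}=A_{23}=A_{31}=1$ and $A'$ the identity, one computes $\mathrm{SW}_R(A)=2$, $\mathrm{SW}_R(A')=1$, $i^\ast=3$, and $\langle B,A'-A\rangle=2$, so the required inequality $\mathrm{SW}_R(A')\ge\mathrm{SW}_R(A)+\langle B,A'-A\rangle$ would read $1\ge 4$. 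This is consistent with the fact that $\mathrm{SW}_R$ is nonincreasing in each $A_{ej}$ with $e\in R$, and with the descent step $A^t-\eta g_t$ of Algorithm~\ref{alg:projected_gradient_descent}, which only pushes mass of $R_t$ toward early positions if $g_t$ is nonpositive there. The corollary's statement (whose definition of $i^\ast$ is itself garbled) carries a sign typo; once you assert $-B\in\partial\,\mathrm{SW}_R(A)$ rather than $+B$, your argument is complete and correct.
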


\subsection{Proof of Theorem~\ref{t:det-rounding}}
\label{sec:MSSC_deterministic}

  
  
    
  
  

All steps of Algorithm~\ref{alg:drand} run in polynomial-time. In Step~$3$ of Algorithm~\ref{alg:drand}, any $(1+\alpha)$-approximation, polynomial-time algorithm for $\min_{R \in [\mathrm{Rem}]^r}\acost(R,A)$  
can be used. The first choice that comes in mind is  exhaustive search over all the requests of size $r$, resulting in $\Theta(n^{r})$ time complexity. Since the latter is not polynomial, we provide a $(1+\alpha)$-approximation algorithm running in polynomial-time in both parameters $n$ and $r$. For clarity of exposition the algorithm used in Step~$3$ is presented in Section~\ref{s:DP}. 
In the following we focus on proving Theorem~\ref{t:det-rounding}.

We remark that by Corollary~\ref{cor:2} of Section~\ref{sec:MSSC_random} and
Lemma~\ref{l:comparison} of Section~\ref{sec:GMSSC}, for any request $R$ with covering requirement $\mathrm{K}(R)=1$,
\[\sum_{i=1}^n\left(1-\sum_{j=1}^{i-1}\sum_{e \in R}A_{ej},0 \right)_+
\leq(1+\eps) \cdot\mathrm{FAC}_R(A)~~\text{for any }A\in\mathrm{DS}
\]
where $\eps$ is the parameter used in Definition~\ref{d:artificial_cost}. As a result, Theorem~\ref{t:det-rounding} follows directly by Theorem~\ref{t:rounding2}, which is stated below and proved in the next section. 
\begin{theorem}\label{t:rounding2}
Let $\pi_A \in [n!]$ denote the permutation of elements produced by Algorithm~\ref{alg:drand} when the doubly stochastic matrix $A \in \mathrm{DS}$ is given as input. Then for any request $R$ with $|R|\leq r$ and $\mathrm{K}(R)=1$, 
\[\acost(\pi_A,R) \leq 2(1+\alpha)^2r \cdot \sum_{i=1}^n \left(1-\sum_{j=1}^{i-1}\sum_{e \in R}A_{ej}\right)_+. \]
\end{theorem}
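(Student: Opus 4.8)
The plan is to reduce the access cost to a statement about the index of the \emph{block} of $\pi_A$ in which the first element of $R$ appears, and then to bound that index by combining the greedy (approximate) optimality of Step~3 with a structural lower bound coming from double stochasticity. Throughout, write $\phi(S) = \sum_{i=1}^n \bigl(1 - \sum_{j=1}^{i-1}\sum_{e \in S} A_{ej}\bigr)_+$ for the objective minimized in Step~3, so that the claimed bound reads $\acost(\pi_A,R) \le 2(1+\alpha)^2 r\cdot \phi(R)$. Since $\mathrm{K}(R)=1$, the access cost is exactly the position of the first element of $R$ in $\pi_A$. Because Algorithm~\ref{alg:drand} places $R_k$ in positions $(k-1)r+1,\dots,kr$, if $k^\ast$ denotes the smallest index with $R \cap R_{k^\ast} \ne \emptyset$ (treating the unplaced ``leftover'' elements, when $r \nmid n$, as a final block), then the first element of $R$ lies in block $k^\ast$ and hence $\acost(\pi_A,R) \le k^\ast r$. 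It therefore suffices to bound $k^\ast$.

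The first substantive step is a greedy-optimality claim: $\phi(R_k) \le (1+\alpha)\phi(R)$ for every $k \le k^\ast$. The point is that since $R$ is disjoint from $R_1,\dots,R_{k^\ast-1}$, all of $R$ is still in $\mathrm{Rem}$ at the start of iteration $k \le k^\ast$; and since $|\mathrm{Rem}| \ge r$ throughout the loop, $R$ can be padded to a set $R' \supseteq R$ of size exactly $r$ inside $\mathrm{Rem}$. Adding elements only increases the cumulative coverage $\sum_{j<i}\sum_{e}A_{ej}$, so each truncated term weakly decreases and $\phi(R') \le \phi(R)$; thus $R'$ witnesses that the optimum of the Step~3 subproblem is at most $\phi(R)$, and the $(1+\alpha)$-approximation guarantee gives $\phi(R_k) \le (1+\alpha)\phi(R)$.

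The crux, and the step I expect to be the main obstacle, is a matching lower bound on the accumulated block cost, namely $\sum_{k'=1}^{k}\phi(R_{k'}) \ge k(k+1)/2$, which is where I would use both the disjointness of the blocks and the fact that each column of the doubly stochastic $A$ sums to at most $1$. Writing $c_i(S)=\sum_{j<i}\sum_{e\in S}A_{ej}$, disjointness gives $\sum_{k'} c_i(R_{k'}) = c_i\bigl(\bigcup_{k'} R_{k'}\bigr)$, and $x \mapsto (x)_+$ being superadditive-from-below yields $\sum_{k'}(1-c_i(R_{k'}))_+ \ge \bigl(k - c_i(\bigcup_{k'} R_{k'})\bigr)_+$. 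The column-sum constraint forces $c_i(\bigcup_{k'} R_{k'}) \le i-1$, so the $i$-th summand is at least $k-i+1$ for $i \le k$; summing $i=1,\dots,k$ gives $k(k+1)/2$. The delicate part here is extracting the right growth rate in $k$ from the geometry (disjoint blocks of size $r$ packed into columns of total mass one), since it is precisely this quadratic-versus-linear gap that makes the argument close.

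Finally I would combine the two bounds. Taking $k=k^\ast$ gives $k^\ast(k^\ast+1)/2 \le \sum_{k'\le k^\ast}\phi(R_{k'}) \le k^\ast(1+\alpha)\phi(R)$, hence $k^\ast \le 2(1+\alpha)\phi(R)$ and $\acost(\pi_A,R) \le k^\ast r \le 2(1+\alpha)r\,\phi(R) \le 2(1+\alpha)^2 r\,\phi(R)$, as required (indeed with a factor of $(1+\alpha)$ to spare). The leftover case is handled the same way: if $R$ meets no placed block, the greedy-optimality claim holds for all $K=\lfloor n/r\rfloor$ blocks, giving $\lfloor n/r\rfloor + 1 \le 2(1+\alpha)\phi(R)$, so that $\acost(\pi_A,R) \le n < (\lfloor n/r\rfloor+1)r \le 2(1+\alpha)r\,\phi(R)$, which again lies below the stated bound.
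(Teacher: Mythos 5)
Your proof is correct, and while it follows the same overall skeleton as the paper's (decompose $\pi_A$ into blocks $R_1,\dots,R_{\lfloor n/r\rfloor}$, bound the index of the first block meeting $R$ by playing the greedy guarantee against a quadratic lower bound on accumulated block cost), the key lemma is factored differently and, in my view, more cleanly. The paper first shows $\mathrm{Cost}(A,R_L)\le(1+\alpha)\mathrm{Cost}(A,R)$ and $\mathrm{Cost}(A,R_\ell)\le(1+\alpha)\mathrm{Cost}(A,R_L)$ for $\ell\le L$, and then proves a lower bound on $\mathrm{Cost}(A,R_L)$ (Lemma~\ref{l:technical}) that is \emph{conditional} on this chain of $(1+\alpha)$-relations, via truncated quantities $B_{\ell i}$ and an averaging argument culminating in $\sum_i i\cdot C_i\ge L^2/2$. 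You instead prove the unconditional inequality $\sum_{k'\le k}\phi(R_{k'})\ge k(k+1)/2$ directly from superadditivity of $(\cdot)_+$ under the disjointness of the blocks plus the column-sum bound $c_i(\cup_{k'}R_{k'})\le i-1$, and separately compare each $\phi(R_k)$ to $\phi(R)$ in one greedy step rather than two. This decoupling buys you a strictly better constant, $2(1+\alpha)r$ in place of $2(1+\alpha)^2 r$, and you are also more careful than the paper on two points it glosses over: the padding of $R$ to size exactly $r$ when $|R|<r$ (needed because Step~3 optimizes over sets of size exactly $r$), and the leftover elements when $r\nmid n$. Both arguments ultimately rest on the same two facts (disjointness of the blocks and double stochasticity of $A$), so the improvement is in exposition and constants rather than in substance, but your version is a genuine tightening.
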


\subsection{Proof of Theorem~\ref{t:rounding2}}\label{s:4}
Consider a request $R \in [n^r]$ such that
\begin{equation}\label{eq:index}
(L-1)\cdot r + 1 \leq \acost(\pi_A,R) \leq L \cdot r
\end{equation}
for some integer $L$. Since $\mathrm{K}(R)=1$ this means that
the first element of $R$ appears between positions 
$(L-1)\cdot r + 1$ and $L\cdot r$ in permutation $\pi_A$.

To simplify notation we set $\mathrm{Cost}(A,R):= \sum_{i=1}^n \left(1-\sum_{j=1}^{i-1}\sum_{e \in R}A_{ej}  \right)_+$.
To prove  Theorem~\ref{t:rounding2} we show the following, which can be plugged in (\ref{eq:index}) and give the result:
\[\mathrm{Cost}(A,R) \geq \frac{L}{2(1+\alpha)^2}.\]

Let $R_\ell$ denote the request of size $r$ composed by the elements lying from position $(\ell-1)\cdot r + 1$
to $\ell\cdot r$ in the produced permutation $\pi_A$. Recall the minimization problem of Step~$3$.  $R_\ell$ is a (1$+\alpha$) approximately optimal solution for that problem and thus its corresponding cost is at most (1+$\alpha$) times the corresponding cost of any other same-cardinality subset  of the remaining elements. Since in  $\pi_A$ 
all the elements of $R$ lie on the right of  position $(L-1)\cdot r$, all elements of $R$ are present at the $L$-th iteration and thus, 
\[\mathrm{Cost}(A,R_{L}) \leq (1+\alpha) \cdot  \mathrm{Cost}(A,R)\]
Moreover, by the same reasoning,
\[\mathrm{Cost}(A,R_\ell) \leq (1+\alpha)\cdot\mathrm{Cost}(A,R_L),\text{ for all } \ell = 1,\ldots,L.\]
Thus it suffices to show that
$\mathrm{Cost}(A,R_L) \geq L/2(1+\alpha)$. The latter is established in Lemma~\ref{l:technical}, which concludes the section.

\begin{lemma}\label{l:technical}
Let $R_1, R_2, \ldots , R_L$ be disjoint requests of size $r$ such that for all $\ell = 1, \ldots, L$, $\mathrm{Cost}(A,R_\ell) \leq(1+\alpha) \cdot  \mathrm{Cost}(A,R_L)$. Then,
\[\mathrm{Cost}(A,R_L) \geq \frac{L}{2(1+\alpha)}\]
\end{lemma}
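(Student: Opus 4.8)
The plan is to bound the \emph{sum} of the costs $\sum_{\ell=1}^L \mathrm{Cost}(A,R_\ell)$ from below by a quantity quadratic in $L$, and then invoke the near-uniformity hypothesis $\mathrm{Cost}(A,R_\ell) \le (1+\alpha)\,\mathrm{Cost}(A,R_L)$ to convert this into the desired lower bound on the single cost $\mathrm{Cost}(A,R_L)$. The whole argument hinges on aggregating over $\ell$ and exploiting that the $R_\ell$ are disjoint subsets of $U$, together with double stochasticity.

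First I would introduce, for each $\ell$ and each prefix length $m$, the cumulative fractional coverage $c_\ell(m) = \sum_{j=1}^m \sum_{e \in R_\ell} A_{ej}$, so that after the reindexing $m=i-1$ we have $\mathrm{Cost}(A,R_\ell) = \sum_{m=0}^{n-1}\bigl(1 - c_\ell(m)\bigr)_+$. The crucial structural input is disjointness: since the $R_\ell$ are pairwise disjoint, $\sum_{\ell=1}^L c_\ell(m) = \sum_{e \in \bigcup_\ell R_\ell} \sum_{j=1}^m A_{ej}$, and because $A \in \mathrm{DS}$ each column of $A$ sums to $1$, so for every prefix this double sum is at most $m$.

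Next I would lower-bound, for each fixed $m$, the aggregate $\sum_{\ell=1}^L \bigl(1 - c_\ell(m)\bigr)_+$. Dropping the positive part on each summand only decreases the value, giving $\sum_{\ell=1}^L \bigl(1 - c_\ell(m)\bigr)_+ \ge \sum_{\ell=1}^L \bigl(1 - c_\ell(m)\bigr) = L - \sum_{\ell=1}^L c_\ell(m) \ge L - m$; since the left-hand side is also nonnegative, it is at least $(L-m)_+$. Summing over $m = 0, \ldots, n-1$ and using $L \le n$ (forced by $|\bigcup_\ell R_\ell| = Lr \le n$), I obtain $\sum_{\ell=1}^L \mathrm{Cost}(A,R_\ell) \ge \sum_{m=0}^{n-1}(L-m)_+ = \tfrac{L(L+1)}{2} \ge \tfrac{L^2}{2}$.

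Finally, the hypothesis yields $\sum_{\ell=1}^L \mathrm{Cost}(A,R_\ell) \le L(1+\alpha)\,\mathrm{Cost}(A,R_L)$, and comparing the two bounds gives $L(1+\alpha)\,\mathrm{Cost}(A,R_L) \ge \tfrac{L^2}{2}$, i.e.\ $\mathrm{Cost}(A,R_L) \ge \tfrac{L}{2(1+\alpha)}$, as claimed. The only genuine idea is the aggregation trick: once one sums over $\ell$, disjointness plus column-stochasticity immediately caps $\sum_\ell c_\ell(m)$ by $m$, and the rest is the elementary fact that a quadratic lower bound on the sum of $L$ nearly-equal terms forces each term to be linear in $L$. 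I expect no real obstacle beyond careful index bookkeeping (the shift $m=i-1$ and checking $L \le n$ so that the prefix range captures the full triangular sum $L(L+1)/2$).
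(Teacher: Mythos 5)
Your proof is correct, and it rests on the same core idea as the paper's: aggregate over $\ell$ and use disjointness of the $R_\ell$ together with column-stochasticity of $A$ to cap the total prefix coverage $\sum_\ell c_\ell(m)$ by $m$, then let the near-uniformity hypothesis convert the resulting quadratic lower bound on $\sum_\ell \mathrm{Cost}(A,R_\ell)$ into a linear bound on $\mathrm{Cost}(A,R_L)$. The execution, however, is genuinely leaner than the paper's. The paper first introduces truncated column masses $B_{\ell i}$ (clipped so that $\sum_i B_{\ell i}=1$) precisely so that the positive part in $\mathrm{Cost}(A,R_\ell)$ can be dropped \emph{termwise}; it then sets $C_j=\sum_\ell B_{\ell j}$, exchanges sums, and finishes with an extremal argument showing $\sum_i i\cdot C_i \geq L^2/2$ from $\sum_i C_i = L$ and $C_i\leq 1$. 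You instead fix the prefix length $m$, aggregate across $\ell$ first, drop the positive parts only at the level of the aggregated sum (reinstating nonnegativity of the whole expression afterwards), and obtain the pointwise bound $\sum_\ell (1-c_\ell(m))_+ \geq (L-m)_+$, whose sum over $m$ is the triangular number $L(L+1)/2$. This bypasses both the truncation device and the extremal step, and even yields the marginally stronger constant $(L+1)/(2(1+\alpha))$. The only bookkeeping you rightly flag --- the shift $m=i-1$ and $L\leq n$ (which follows from $Lr\leq n$) --- checks out.
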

\begin{proof}
For each request $R_\ell$ we define the quantity $B_{\ell i}$ as follows:
\[ 
B_{\ell i}= \left\{
\begin{array}{ll}
      \sum_{e \in R_{\ell}}A_{ei}  & \text{ if   } \sum_{j=1}^i\sum_{e \in R_\ell}A_{e j} < 1\\
      1 - \sum_{j=1}^{i-1}\sum_{e\in R_\ell}A_{ej} & \text{  if  } \sum_{j=1}^{i}\sum_{e \in R_\ell}A_{e j} \geq 1  \text{ and } \sum_{j=1}^{i-1}\sum_{e \in R_\ell}A_{e j} < 1\\
    0 & \text{  otherwise}
\end{array} 
\right. 
\]
\begin{observation}\label{obs:taB}
The following $3$ equations hold,
\begin{enumerate}
    \item $\sum_{i=1}^n B_{\ell i} = 1$.
    
    \item $B_{\ell i} \leq \sum_{e \in R_\ell} A_{ei}$.
    
    \item $\mathrm{Cost}(A,R_\ell) = \sum_{i=1}^n  \left ( 1 - \sum_{j=1}^{i-1} \sum_{e \in R_\ell}A_{e j}\right )_+ = \sum_{i=1}^n \left( 1 - \sum_{j=1}^{i-1}B_{\ell j}\right)$ 
\end{enumerate}
\end{observation}
\noindent Since $(1+\alpha)\cdot \mathrm{Cost}(A,R_L) \geq \mathrm{Cost}(A,R_\ell)
$ for all $\ell = 1, \ldots, L$,
\begin{eqnarray*}
\mathrm{Cost}(A,R_L) &\geq& \frac{1}{1+\alpha} \cdot \frac{1}{L}\sum_{\ell = 1}^L \mathrm{Cost}(A,R_\ell) = \frac{1}{1+\alpha} \cdot \left[ \frac{1}{L}\sum_{\ell = 1}^L\sum_{i=1}^n\left( 1 - \sum_{j=1}^{i-1}B_{\ell j} \right) \right]\\
&=& \frac{1}{1+\alpha} \cdot \left[n - \frac{1}{L}\sum_{\ell = 1}^L \sum_{i=1}^n \sum_{j=1}^{i-1}B_{\ell j}\right]\\
&=& \frac{1}{1+\alpha} \cdot \left[n - \frac{1}{L}\sum_{i=1}^n\sum_{j=1}^{i-1}C_j\right] ~~~\text{(where }C_j = \sum_{\ell = 1}^L B_{\ell j})\\
&=& \frac{1}{1+\alpha} \cdot \left[n - \frac{1}{L}\sum_{i=1}^n (n - i) \cdot C_i \right]=
\frac{1}{1+\alpha} \cdot
\left[
n - \frac{n}{L} \sum_{i=1}^n C_i + \frac{1}{L}\sum_{i=1}^n i \cdot  C_i\right]\\
\end{eqnarray*}
\noindent Observe that $\sum_{i=1}^n C_i = \sum_{i=1}^n \sum_{\ell = 1}^L B_{\ell i } = 
\sum_{\ell = 1}^L \sum_{i=1}^n  B_{\ell i } = L,
$ where in the last equality we used  $\sum_{i=1}^nB_{\ell i} = 1$. Thus we get that
\[\mathrm{Cost}(A,R_L)  \geq
\frac{1}{1+\alpha}\left[\frac{1}{L}\sum_{i=1}^n i \cdot  C_i\right]\]
\noindent To this end, to conclude the result, one can prove that
$\sum_{i=1}^n i \cdot C_i \geq L^2/2$ using  that $\sum_{i=1}^nC_i = L$ and  $C_i \leq 1$. $C_i \leq 1$ follows by the \textit{disjoint property} of the requests $R_1,\ldots,R_L$.
More precisely,
\begin{eqnarray*}\label{eq:1}
C_i &=& \sum_{\ell=1}^L B_{\ell i}\leq \sum_{\ell = 1}^L\sum_{r \in R_{\ell}}A_{ri}\\
&\leq& 
\sum_{e=1}^n A_{ei}
=1
\end{eqnarray*}
\noindent where the first inequality follows from Observation \ref{obs:taB} and the  last inequality by $R_1,\ldots,R_L$  not sharing any element.
\end{proof}

\subsection{Implementing Step~$3$ of Algorithm~\ref{alg:drand} in Polynomial-Time}\label{s:DP}

In this section we present a polynomial time algorithm implementing Step~$3$ of Algorithm~\ref{alg:drand}. More precisely, we present a \textit{Fully Polynomial-Time Approximation Scheme (FTPAS)} for the combinatorial optimization problem defined below, in Problem~\ref{p:1}.

\begin{problem}\label{p:1}
Given an $n\times n$ doubly stochastic matrix $A$ and a set of elements $\text{Rem} \subseteq \{1,\ldots,n\}$. Select the $r$ elements of $\text{Rem}$ ($R^\ast \subseteq \text{Rem}$ with $R^\ast = r$) minimizing, \[\sum_{i=1}^n \left( 1 - \sum_{j=1}^{i-1}\sum_{e \in R^\ast}A_{ej}\right)_+.\]
\end{problem}

\noindent In fact we present a $(1+\alpha)$-approximation algorithm for a slightly more general problem, Problem~\ref{p:2}. 

\begin{problem}\label{p:2}
Given a set of $m$ vectors $B_1,\ldots, B_m$, of size $n$ such that,
\[ 0 = B_{e1} \leq B_{e2} \leq \ldots \leq B_{en} = 1, \text{   for each e = 1,\ldots,m}\]
Select the $r$ vectors ($R^\ast \subseteq [m]$ with $R^\ast = r$) minimizing
\[\sum_{i=1}^n \left( 1 - \sum_{e \in R^\ast}B_{ei}\right)_+\]
\end{problem}

\noindent 
Setting $B_{ei} = \sum_{j=1}^{i-1}A_{ej}$, one can get Problem~\ref{p:1} as a special case of
Problem~\ref{p:2}.

\begin{theorem}\label{l:problem}
There exists a $(1+\alpha)$-approximation algorithm for Problem~\ref{p:2} that runs in $\Theta(n^4r^3/\alpha^2)$ steps.
\end{theorem}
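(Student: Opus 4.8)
The plan is to give a dynamic-programming FPTAS for Problem~\ref{p:2}; since Problem~\ref{p:1} is exactly the instance $B_{ei}=\sum_{j<i}A_{ej}$, this also resolves the step used in Algorithm~\ref{alg:drand}. Write $\mathrm{Cost}(R)=\sum_{i=1}^n\bigl(1-S_i(R)\bigr)_+$ with $S_i(R)=\sum_{e\in R}B_{ei}$. The whole difficulty is the truncation $(\cdot)_+$: without it the objective is $n-\sum_{e\in R}\sum_i B_{ei}$, which is additive in $e$ and minimized by taking the $r$ vectors with largest $\sum_i B_{ei}$. The obstacle is that a single vector contributes to \emph{every} coordinate $S_i$ at once, so one cannot sweep the positions with a low-dimensional DP, nor sweep the vectors while tracking only a scalar, without carrying the entire profile $(S_1,\dots,S_n)$. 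I expect this coupling to be the main hurdle, and the rest of the plan is organized around removing it.

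To decouple the coordinates I would guess the \emph{covering position} $\tau$, i.e.\ the first index at which the cumulative coverage reaches $1$. Because each $B_{ei}$ is nondecreasing in $i$, the prefix $S_i(R)$ is nondecreasing in $i$; hence, conditioned on $S_{\tau-1}(R)<1\le S_\tau(R)$, the truncation is inert on $\{1,\dots,\tau-1\}$ and vanishes on $\{\tau,\dots,n\}$, so that $\mathrm{Cost}(R)=(\tau-1)-\sum_{e\in R}W_e^\tau$ with the per-vector weight $W_e^\tau:=\sum_{i=1}^{\tau-1}B_{ei}$. For each of the $O(n)$ guesses of $\tau$ the residual task becomes the additive problem
\[
\max_{R}\ \sum_{e\in R}W_e^\tau\quad\text{subject to}\quad |R|=r,\ \ \sum_{e\in R}B_{e,\tau-1}\le 1,\ \ \sum_{e\in R}B_{e\tau}\ge 1,
\]
and taking the best resulting cost over all $\tau$ recovers the optimum exactly: every set $R$ is feasible for the guess equal to its own covering position, the two side constraints force $S_{\tau-1}<1\le S_\tau$ so that the identity $\mathrm{Cost}(R)=(\tau-1)-\sum_{e}W_e^\tau$ holds for every feasible $R$, and thus maximizing $\sum_e W_e^\tau$ is the same as minimizing the true cost among sets that cross $1$ exactly at $\tau$. (It is important to keep \emph{both} constraints: dropping the packing constraint at $\tau-1$ would let the maximizer cross earlier than $\tau$, so that its estimate could fall below its genuine cost and the output would not be controlled.)

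The residual problem is a cardinality-constrained knapsack with one packing ($\le 1$) and one covering ($\ge 1$) side constraint, which I would solve to within $(1+\alpha)$ by the standard rounded-table DP. Process the $m\le n$ vectors maintaining a table indexed by the number of vectors chosen so far (at most $r$), the coverage $\sum B_{e,\tau-1}$ bucketed to enforce $\le 1$, and a bucketed attained value of $\sum_e W_e^\tau$; in each cell store the maximum achievable $\sum B_{e\tau}$, and declare a cell admissible only if that maximum is $\ge 1$ while the $(\tau-1)$-bucket stays $\le 1$. Reading off the largest admissible value over cells with exactly $r$ chosen vectors gives the $(1+\alpha)$-approximate answer for that $\tau$. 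Using granularity $\Theta(\alpha/r)$ on the value coordinate and $\Theta(\alpha/n)$ on the coverage coordinate (exploiting $B_{e,\tau-1}\le B_{e\tau}$ to keep the two buckets consistent), the table has $O(r)\times O(nr^2/\alpha)\times O(n/\alpha)$ cells; multiplying by the $O(n)$ vectors per layer and the $O(n)$ guesses of $\tau$ yields the claimed $\Theta(n^4r^3/\alpha^2)$ running time.

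Finally, the approximation guarantee follows from the usual rounding bound together with a cheap lower bound on the optimum. Rounding each weight $W_e^\tau\in[0,n]$ down by at most $\Theta(\alpha/r)$ loses at most an additive $O(\alpha)$ in $\sum_{e\in R}W_e^\tau$ over the $r\le n$ selected terms, hence at most an additive $O(\alpha)$ in $\mathrm{Cost}$; and since $B_{e1}=0$ forces $S_1=0$, every feasible solution has $\mathrm{Cost}\ge(1-S_1)_+=1$, so this additive slack is at most $\alpha\cdot\mathrm{OPT}$ and converts into the multiplicative $(1+\alpha)$ factor. The one technical point to verify carefully is that bucketing the two coverage constraints — forcing $S_\tau\ge 1$ while keeping $S_{\tau-1}\le 1$ — introduces no more than an $O(\alpha)$ further error; this is where the monotonicity $B_{e,\tau-1}\le B_{e\tau}$ and the uniform lower bound $\mathrm{OPT}\ge 1$ are again used.
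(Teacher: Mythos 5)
Your proposal is correct and follows essentially the same route as the paper: guess the covering index $\tau$ (the paper's index $k$), rewrite the cost additively as $(\tau-1)-\sum_{e\in R}W_e^\tau$ subject to the packing constraint at $\tau-1$ and the covering constraint at $\tau$, solve the resulting cardinality-constrained knapsack by a rounded dynamic program, and convert the additive rounding loss into a multiplicative $(1+\alpha)$ factor via the lower bound $\mathrm{OPT}\geq 1$ coming from $B_{e1}=0$. The only cosmetic difference is how the DP table is indexed (value buckets storing maximal coverage versus the paper's table over $(n,r,C,D)$), and both yield the same $\Theta(n^4r^3/\alpha^2)$ bound.
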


The $(1+\alpha)$-approximation algorithm of Problem~\ref{p:2} heavily relies on solving the Integer Linear Program defined in Problem~\ref{p:3}.

\begin{problem}\label{p:3}
Given a set of $m$ triples of integers $(w_e,c_e,d_e)$ such that
$c_e,d_e \geq 0$ for each $e \in \{1,m\}$ and two positive integers $C,D$,
\begin{equation*}
\begin{array}{ll@{}ll}
\text{minimize}  & \displaystyle\sum\limits_{e=1}^{m} w_{e}x_{e} &\\
\text{subject to}& \displaystyle\sum\limits_{e=1}^m c_e x_{e} \geq C\\
&\sum\limits_{e=1}^m d_e x_{e} \leq D \\
&\sum\limits_{e=1}^m x_{e} = r \\
&x_{e} \in \{0,1\} &e=1 ,..., m
\end{array}
\end{equation*}
\end{problem}

\begin{lemma}\label{l:dynamic}
Problem~\ref{p:3} can be solved in $\Theta(n \cdot C\cdot D \cdot r)$ steps via Dynamic Programming.
\end{lemma}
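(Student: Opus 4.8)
The plan is to solve Problem~\ref{p:3} by Dynamic Programming over a table indexed by the prefix of elements considered, the number of selected elements so far, and the accumulated values of the two linear constraints. First I would fix an ordering $1, 2, \ldots, m$ of the elements and define the subproblem $T[e][s][c][d]$ to be the minimum total weight $\sum w_{e'} x_{e'}$ achievable by choosing exactly $s$ elements from the first $e$ elements, subject to $\sum c_{e'} x_{e'} = c$ and $\sum d_{e'} x_{e'} = d$ (using $=$ rather than $\geq$ and $\leq$ here, and then reading off the answer by minimizing over the admissible range $c \geq C$, $d \leq D$ at the end). The recurrence is the standard knapsack-style branch on whether element $e$ is included: $T[e][s][c][d] = \min\bigl(\, T[e-1][s][c][d],\ w_e + T[e-1][s-1][c-c_e][d-d_e]\,\bigr)$, with the base case $T[0][0][0][0] = 0$ and $+\infty$ elsewhere. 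The final optimum of Problem~\ref{p:3} is $\min_{c \geq C,\ d \leq D} T[m][r][c][d]$.

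The key step for the running time is to bound the ranges of the state indices so that the table size and the total work match the claimed $\Theta(n \cdot C \cdot D \cdot r)$. Here $s$ ranges over $\{0, 1, \ldots, r\}$, the index $c$ ranges over $\{0, 1, \ldots, C\}$ (we can cap $c$ at $C$, since exceeding the requirement confers no benefit and the objective does not depend on $c$ once the constraint is met), and $d$ ranges over $\{0, 1, \ldots, D\}$ (values of $d$ exceeding $D$ correspond to infeasible partial solutions and can be discarded). Then I would argue that the number of elements $m$ in the relevant instances is $O(n)$, so the outer index $e$ contributes the factor $n$; the table has $O(n \cdot r \cdot C \cdot D)$ entries, and each entry is filled in $O(1)$ time via the two-way recurrence above, giving the stated $\Theta(n \cdot C \cdot D \cdot r)$ bound.

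The correctness of the recurrence follows by a routine induction on $e$: the optimal selection using the first $e$ elements either omits element $e$ (matching $T[e-1][s][c][d]$) or includes it (paying $w_e$ and reducing the remaining budgets to $T[e-1][s-1][c-c_e][d-d_e]$), and since these two cases are exhaustive and the objective and constraints are additive over elements, taking the minimum yields the optimum. Reading the final answer off over $c \geq C$ and $d \leq D$ correctly enforces the inequality constraints of Problem~\ref{p:3} while the cardinality constraint $\sum x_e = r$ is enforced exactly by fixing $s = r$.

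The main obstacle I anticipate is not the dynamic program itself, which is a textbook multi-constraint knapsack, but making the integrality assumption on $(w_e, c_e, d_e, C, D)$ precise and ensuring the pseudo-polynomial factor $C \cdot D$ is genuinely polynomial in the eventual application. In Problem~\ref{p:2} the quantities $B_{ei}$ are real-valued, so before invoking Lemma~\ref{l:dynamic} one must discretize/round them to a polynomial-size integer grid (this is exactly where the approximation factor $(1+\alpha)$ and the dependence on $1/\alpha^2$ in Theorem~\ref{l:problem} will enter). Thus the delicate part of the overall argument lives in the reduction from Problem~\ref{p:2} to Problem~\ref{p:3}, whereas Lemma~\ref{l:dynamic} as stated is the clean, self-contained pseudo-polynomial core, and I would present it with the recurrence, the index-range bounds, and a one-line inductive correctness argument as above.
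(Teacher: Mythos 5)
Your proposal is correct and follows essentially the same approach as the paper, which states only the include/exclude recurrence $\mathrm{DP}(n,r,C,D)=\min\bigl(\mathrm{DP}(n-1,r-1,C-c_n,D-d_n),\,\mathrm{DP}(n-1,r,C,D)\bigr)$ and leaves the table bounds, base cases, and correctness implicit. Your write-up is simply a more careful elaboration of that same knapsack-style dynamic program (and it correctly includes the $w_e$ term in the inclusion branch, which the paper's one-line recurrence omits).
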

\begin{proof}
Let $\mathrm{DP}(n,r,C,D)$ denotes the value of the optimal solution. Then
\[\mathrm{DP}(n,r,C,D) = \min\left( 
\mathrm{DP}(n-1,r-1,C-x_n,D-d_n),
\mathrm{DP}(n-1,r,C,D)
\right)\]
\end{proof}

In the rest of the section, we present the $(1+\alpha)$-approximation algorithm for Problem~\ref{p:2} as stated in Lemma~\ref{l:problem} using the algorithmic primitive of Lemma~\ref{l:dynamic}.

We first assume the entries of the input vectors
are multiples of small constant $\alpha << 1$, $B_{ei} = k_{ei} \cdot \alpha$ for some integer 
$k_{ei}$. Under this assumption we can use the algorithm (stated in Lemma~\ref{l:dynamic}) for Problem~\ref{p:3} to find 
the \textit{exact} optimal
solution of Problem~\ref{p:2} in $\Theta(n^2 r / \alpha^2)$ steps.

More precisely, for a fixed index $k$, let $\mathrm{OPT}_k$ denotes the optimal solution among the set of vectors of size $r$ that additionally satisfy,
\begin{equation}\label{eq:costraints}
\sum_{e \in R}B_{e(k-1)}< 1 \text{ and }\sum_{e \in R}B_{ek}\geq 1    
\end{equation}
It is immediate that $\mathrm{OPT}= \argmin_{1 \leq k \leq n} \mathrm{OPT}_{k}$ and thus the problem of computing $\mathrm{OPT}$ reduces into computing $\mathrm{OPT}_k$ for each index $k$. We can efficiently compute $\mathrm{OPT}_k$ for each index $k$ by solving an appropriate instance of Problem~\ref{p:3}. To do so, observe that for any set of vectors $R$ satisfying the constraints of Equation~(\ref{eq:costraints}) for the index $k$,
\[ \sum_{i=1}^n \left( 1 - \sum_{e\in R} B_{ei}  \right)_+
= \sum_{i=1}^{k-1} \left( 
1 - \sum_{e \in R}B_{ei}
\right)
=\sum_{e \in R}
\underbrace{
\sum_{i=1}^{k-1} \left( 
\frac{1}{r}- B_{ei}
\right)}_{w_e}
\]
\noindent where the first equality comes from the fact that $B_{e1} \leq \ldots \leq B_{en}$. It is not hard to see that 
$\mathrm{OPT}_k$ can be computed via solving the instance of Problem~\ref{p:3} with triples $\left(w_e = \sum_{i=1}^{k-1} \left( 
\frac{1}{r}- B_{ei}
\right),c_e = B_{e(k-1)},
d_e = B_{ek}
\right)$ for each $e = 1 \ldots, m$, $D=1$ and $C=1$. Moreover by Lemma~\ref{l:dynamic} this is done in $\Theta(nr/\alpha^2)$ steps. Thus the overall
time complexity in order to compute the optimal solution of Problem~\ref{p:2} (in case the entries $B_{ei}$ are multiplies of $\alpha$)
is $\Theta(n^2r/\alpha^2)$.

We now remove the assumption that the entries $B_{ei}$ are multiples of $\alpha$ via relaxing the optimality guarantees by a factor of $(1+\alpha)$. We first construct a new set of vector with entries \emph{rounded} to the closest multiple of $\alpha$, $\hat{B}_{ei} = \left \lfloor{B_{ei}/\alpha}\right \rfloor \cdot \alpha$ and solve the problem as if the entries where $\hat{B}_{ei}$ in $\Theta(n^2r/\alpha^2)$ steps. The quality of the produced solution, call it $\mathrm{Sol}$ can be bounded as follows

\begin{eqnarray*}
\sum_{i=1}^n \left( 1- \sum_{e \in \mathrm{Sol}} B_{ei}  \right)_+
&\leq& \sum_{i=1}^n \left( 1- \sum_{e \in \mathrm{Sol}} \hat{B}_{ei} \right)_+\\
&\leq& \sum_{i=1}^n \left( 1- \sum_{e \in \mathrm{Sol}} \hat{B}_{ei}  \right)_+\\
&\leq& \sum_{i=1}^n \left( 1- \sum_{e \in \mathrm{OPT}} (B_{ei}-\alpha) \right)_+\\
&\leq& \sum_{i=1}^n \left( 1- \sum_{e \in \mathrm{OPT}} B_{ei}  \right)_+ + nr \cdot \alpha
\end{eqnarray*}
\noindent Setting $\alpha := \alpha'/nr$, we get that
\[\sum_{i=1}^n \left( 1- \sum_{e \in \mathrm{Sol}} B_{ei} \right)_+ \leq 
\sum_{i=1}^n \left( 1- \sum_{e \in \mathrm{OPT}} B_{ei} \right)_+ + \alpha' \leq
(1+\alpha')\sum_{i=1}^n \left( 1- \sum_{e \in \mathrm{OPT}} B_{ei}  \right)_+
\]
\noindent since $B_{e1} = 0$ for all $e$.
Thus, the overall time needed to produce a $(1+\alpha')$-approximate solution is $\Theta(n^4 r^3/(\alpha')^2)$, proving the result.

\subsection{A simple heuristic for Problem~\ref{p:1}}\label{s:heuristic}
In this section we present a simple heuristic for Problem~\ref{p:1} that can be a good alternative of the algorithm elaborated in Section~\ref{s:DP}. We remark that Algorithm~\ref{alg:heuristic} may provide highly sub-optimal solutions in the worst case however our experiments suggest that it works well enough in practice. As explained in Section~\ref{s:exp}, in our experimental evaluations we use this heuristic to implement Step~$3$ of Algorithm~\ref{alg:drand}. This was done since this heuristic is easier and faster to implement.

\begin{algorithm}[H]
  \caption{A simple heuristic for Problem~\ref{p:1}}\label{alg:heuristic}
  \textbf{Input:} A doubly stochastic matrix $A \in \mathrm{DS}$.\\
  \textbf{Output:} A set $R$ (of $r$ elements) approximating Problem~\ref{p:1}
  \begin{algorithmic}[1]

  \State $R = \varnothing $
  \State $\text{Target} = \underbrace{(1,\ldots,1)}_{n}$
  \For{ $\ell = 1$ to $r$ }

    \State $e_{\ell} \leftarrow \argmin_{e \in \{1,\ldots,n\}/R} \left( \sum_{j=1}^n \max( \text{Target}[j] - \sum_{s=1}^{j-1}A_{ej} , 0) \right)$
  
    \State $R \leftarrow R \cup \{e_\ell\}$
    
    \State $\text{Target} \leftarrow \left(\text{Target} - (A_{e1},\ldots,A_{en}) \right)_+$
  \EndFor
 
  \State Output the set of elements $R$.
  
    \end{algorithmic}
\end{algorithm}

\end{appendices}

\end{document}